\documentclass[a4paper]{llncs}

\usepackage{amsfonts}
\usepackage{graphicx}
\usepackage{caption}
\usepackage{subcaption}
\usepackage{footnote}
\usepackage{enumerate}

\RequirePackage{graphics}
\usepackage{xcolor}
\usepackage{footnote}
\usepackage{xparse}
\usepackage{cite}
\usepackage{booktabs}

\usepackage[noend]{algpseudocode}
\usepackage{algorithm}

\algblock{ParFor}{EndParFor}
\algnewcommand\algorithmicparfor{\textbf{parfor}}
\algnewcommand\algorithmicpardo{\textbf{do}}
\algnewcommand\algorithmicendparfor{\textbf{end\ parfor}}
\algrenewtext{ParFor}[1]{\algorithmicparfor\ #1\ \algorithmicpardo}
\algrenewtext{EndParFor}{\algorithmicendparfor}
\algnewcommand\algorithmicinput{\textbf{Input:}}
\algnewcommand\algorithmicoutput{\textbf{Output:}}
\algnewcommand\Input{\item[\algorithmicinput]}%
\algnewcommand\Output{\item[\algorithmicoutput]}%


\usepackage[hyphens]{url} 
\usepackage{amssymb}
\usepackage{amsmath}
\usepackage{keyval}
\usepackage{xspace}
\usepackage{paralist}
\usepackage{listings}
\usepackage{multirow}
\usepackage{stmaryrd}
\usepackage{adjustbox} 
\usepackage{makecell}
\usepackage{sidecap}
\usepackage{booktabs} 
\usepackage{threeparttable, tablefootnote}

\RequirePackage{tikz}
\usetikzlibrary{arrows,automata,shapes,calc,through,decorations.pathmorphing,decorations.fractals,chains,shapes.multipart}

\usepackage{arydshln} 
\usepackage[free-standing-units]{siunitx}
\usepackage{circuitikz}
\usepackage{tabularx} 

\def\titlename{Verification of Deep Convolutional Neural Networks Using ImageStars  \xspace}
\def\authortran{Hoang-Dung Tran}

\def\authorweiming{Weiming Xiang}
\def\authorstan{Stanley Bak}

\def\authortaylor{Taylor T. Johnson}

\usepackage[pdftex]{hyperref}
\hypersetup{
  colorlinks				=true,
  citecolor					={purple},
  linkcolor 				={blue},
  bookmarksnumbered	=true
}


\newcommand{\commenttaylor}[1]{}







\newcommand{\nnnum}[1]{\relax\ifmmode
  {\mathbb #1}_{\geq 0} \else ${\mathbb #1}_{\geq 0}$
  \fi}
\newcommand{\npnum}[1]{\relax\ifmmode
  {\mathbb #1}_{\leq 0} \else ${\mathbb #1}_{\leq 0}$
  \fi}
\newcommand{\pnum}[1]{\relax\ifmmode
  {\mathbb #1}_{> 0} \else ${\mathbb #1}_{> 0}$
  \fi}
\newcommand{\nnum}[1]{\relax\ifmmode
  {\mathbb #1}_{< 0} \else ${\mathbb #1}_{< 0}$
  \fi}
\newcommand{\plnum}[1]{\relax\ifmmode
  {\mathbb #1}_{+} \else ${\mathbb #1}_{+}$
  \fi}
\newcommand{\nenum}[1]{\relax\ifmmode
  {\mathbb #1}_{-} \else ${\mathbb #1}_{-}$
  \fi}


\newcommand{\extb}[1]{\relax\ifmmode {\sf ExtBeh}_{#1} \else ${\sf ExtBeh}_{#1}$\fi}
\newcommand{\tdists}[1]{\relax\ifmmode {\sf Tdists}_{#1} \else ${\sf Tdists}_{#1}$\fi}

\newcommand{\exec}[1]{\relax\ifmmode {\sf Execs}_{#1} \else ${\sf Exec}_{#1}$\fi}
\newcommand{\execf}[1]{\relax\ifmmode {\sf Execs}^*_{#1} \else ${\sf Exec}^*_{#1}$\fi}
\newcommand{\execi}[1]{\relax\ifmmode {\sf Execs}^\omega_{#1} \else ${\sf Exec}^\omega_{#1}$\fi}

\newcommand{\ctrace}[1]{\relax\ifmmode {\sf Ctraces}_{#1} \else ${\sf Ctraces}_{#1}$\fi}

\newcommand{\trace}[1]{\relax\ifmmode {\sf Traces}_{#1} \else ${\sf Traces}_{#1}$\fi}
\newcommand{\tracef}[1]{\relax\ifmmode {\sf Traces}^*_{#1} \else ${\sf Traces}^*_{#1}$\fi}
\newcommand{\tracei}[1]{\relax\ifmmode {\sf Traces}^\omega_{#1} \else ${\sf Traces}^\omega_{#1}$\fi}

\newcommand{\frag}[1]{\relax\ifmmode {\sf Frags}_{#1} \else ${\sf Frags}_{#1}$\fi}
\newcommand{\fragf}[1]{\relax\ifmmode {\sf Frags}^*_{#1} \else ${\sf Frags}^*_{#1}$\fi}
\newcommand{\fragi}[1]{\relax\ifmmode {\sf Frags}^\omega_{#1} \else ${\sf Frags}^\omega_{#1}$\fi}

\newcommand{\reach}[1]{\relax\ifmmode {\sf Reach}_{#1} \else ${\sf Reach}_{#1}$\fi}

\def\A{{\cal A}} 
\def\E{{\cal E}} 
\def\I{{\cal I}} 
 %
\def\R{{\cal R}} 
\def\T{{\cal T}} 
\def\U{{\cal U}} 


\newcommand{\col}[1]{\relax\ifmmode \mathscr #1\else $\mathscr #1$\fi}

\definecolor{HIOAcolor}{rgb}{0.776,0.22,0.07}

\newcommand{\SC}[2]{\relax\ifmmode {\tt Scount}(#1,#2) \else ${\tt Scount}(#1,#2)$\fi}
\newcommand{\SCM}[2]{\relax\ifmmode {\tt Smin}(#1,#2) \else ${\tt Smin}(#1,#2)$\fi}
\newcommand{\Aut}[1]{\relax\ifmmode {\tt Aut}(#1) \else ${\tt Aut}(#1)$\fi}

\newcommand{\act}[1]{{\operatorname{\mathsf{#1}}}}












\renewcommand{\eqref}[1]{Equation~\ref{eq:#1}}



\newcommand{\remove}[1]{}
\newcommand{\salg}[1]{\relax\ifmmode {\mathcal F}_{#1}\else ${\mathcal F}_{#1}$\fi}
\newcommand{\msp}[1]{\relax\ifmmode (#1, \salg{#1}) \else $(#1, \salg{#1})$\fi}
\newcommand{\msprod}[2]{\relax\ifmmode ( #1 \times #2, \salg{#1} \otimes \salg{#2}) \else $(#1 \times #2, \salg{#1} \otimes \salg{#2})$\fi}
\newcommand{\dist}[1]{\relax\ifmmode {\mathcal P}\msp{#1}
  \else ${\mathcal P}\msp{#1}$\fi}
\newcommand{\subdist}[1]{\relax\ifmmode {\mathcal S}{\mathcal P}\msp{#1}
  \else ${\mathcal S}{\mathcal P}\msp{#1}$\fi}
\newcommand{\disc}[1]{\relax\ifmmode {\sf Disc}(#1)
  \else ${\sf Disc}(#1)$\fi}

\newcommand{\Trajeq}{\relax\ifmmode {\mathcal R}_\T \else ${\mathcal R}_\T$\fi}
\newcommand{\Acteq}{\relax\ifmmode {\mathcal R}_A \else ${\mathcal R}_A$\fi}
\newcommand{\noop}{\relax\ifmmode \lambda \else $\lambda$\fi}
\newcommand{\close}[1]{\relax\ifmmode \overline{#1} \else $\overline{#1}$\fi}









\newcommand{\tup}[1]
           {
             \relax\ifmmode
             \langle #1 \rangle
             \else $\langle$ #1 $\rangle$ \fi
           }

\newcommand{\lit}[1]{ \relax\ifmmode
                \mathord{\mathcode`\-="702D\sf #1\mathcode`\-="2200}
                \else {\it #1} \fi }

\newcommand{\figuresize}{\scriptsize}

\lstdefinelanguage{ioa}{
  basicstyle=\figuresize,
  keywordstyle=\bf \figuresize,
  identifierstyle=\it \figuresize,
  emphstyle=\tt \figuresize,
  mathescape=true,
  tabsize=20,
  sensitive=false,
  columns=fullflexible,
  keepspaces=false,
  flexiblecolumns=true,
  basewidth=0.05em,
  escapeinside={(*@}{@*)},
  moredelim=[il][\rm]{//},
  moredelim=[is][\sf \figuresize]{!}{!},
  moredelim=[is][\bf \figuresize]{*}{*},
  keywords={automaton,and,
  	 choose,const,continue, components,
  	 discrete, do,
  	 eff, Eff, external,else, elseif, evolve, end,
  	 fi,for, forward, from,
  	 hidden,
  	 in,input,internal,if,invariant, initially, imports,
     let,
     or, output, operators, od, of,
     pre, Pre,
     return,
     such,satisfies, stop, signature, simulation,
     trajectories,trajdef, transitions, that,then, type, types, to, tasks,
     variables, vocabulary,
     when,where, with,while},
  emph={set, seq, tuple, map, array, enumeration},
   literate=
        {(}{{$($}}1
        {)}{{$)$}}1
        {\\in}{{$\in\ $}}1
        {\\preceq}{{$\preceq\ $}}1
        {\\subset}{{$\subset\ $}}1
        {\\subseteq}{{$\subseteq\ $}}1
        {\\supset}{{$\supset\ $}}1
        {\\supseteq}{{$\supseteq\ $}}1
        {\\forall}{{$\forall$}}1
        {\\le}{{$\le\ $}}1
        {\\ge}{{$\ge\ $}}1
        {\\gets}{{$\gets\ $}}1
        {\\cup}{{$\cup\ $}}1
        {\\cap}{{$\cap\ $}}1
        {\\langle}{{$\langle$}}1
        {\\rangle}{{$\rangle$}}1
        {\\exists}{{$\exists\ $}}1
        {\\bot}{{$\bot$}}1
        {\\rip}{{$\rip$}}1
        {\\emptyset}{{$\emptyset$}}1
        {\\notin}{{$\notin\ $}}1
        {\\not\\exists}{{$\not\exists\ $}}1
        {\\ne}{{$\ne\ $}}1
        {\\to}{{$\to\ $}}1
        {\\implies}{{$\implies\ $}}1
        {<}{{$<\ $}}1
        {>}{{$>\ $}}1
        {=}{{$=\ $}}1
        {~}{{$\neg\ $}}1
        {|}{{$\mid$}}1
        {'}{{$^\prime$}}1
        {\\A}{{$\forall\ $}}1
        {\\E}{{$\exists\ $}}1
        {\\nE}{{$\nexists\ $}}1
        {\\/}{{$\vee\,$}}1
        {\\vee}{{$\vee\,$}}1
        {/\\}{{$\wedge\,$}}1
        {\\wedge}{{$\wedge\,$}}1
        {=>}{{$\Rightarrow\ $}}1
        {->}{{$\rightarrow\ $}}1
        {<=}{{$\Leftarrow\ $}}1
        {<-}{{$\leftarrow\ $}}1
        {~=}{{$\neq\ $}}1
        {\\U}{{$\cup\ $}}1
        {\\I}{{$\cap\ $}}1
        {|-}{{$\vdash\ $}}1
        {-|}{{$\dashv\ $}}1
        {<<}{{$\ll\ $}}2
        {>>}{{$\gg\ $}}2
        {||}{{$\|$}}1
        {[}{{$[$}}1
        {]}{{$\,]$}}1
        {[[}{{$\langle$}}1
        {]]]}{{$]\rangle$}}1
        {]]}{{$\rangle$}}1
        {<=>}{{$\Leftrightarrow\ $}}2
        {<->}{{$\leftrightarrow\ $}}2
        {(+)}{{$\oplus\ $}}1
        {(-)}{{$\ominus\ $}}1
        {_i}{{$_{i}$}}1
        {_j}{{$_{j}$}}1
        {_{i,j}}{{$_{i,j}$}}3
        {_{j,i}}{{$_{j,i}$}}3
        {_0}{{$_0$}}1
        {_1}{{$_1$}}1
        {_2}{{$_2$}}1
        {_n}{{$_n$}}1
        {_p}{{$_p$}}1
        {_k}{{$_n$}}1
        {-}{{$\ms{-}$}}1
        {@}{{}}0
        {\\delta}{{$\delta$}}1
        {\\R}{{$\R$}}1
        {\\Rplus}{{$\Rplus$}}1
        {\\N}{{$\N$}}1
        {\\times}{{$\times\ $}}1
        {\\tau}{{$\tau$}}1
        {\\alpha}{{$\alpha$}}1
        {\\beta}{{$\beta$}}1
        {\\gamma}{{$\gamma$}}1
        {\\ell}{{$\ell\ $}}1
        {--}{{$-\ $}}1
        {\\TT}{{\hspace{1.5em}}}3
      }

\lstdefinelanguage{ioaNums}[]{ioa}
{
  numbers=left,
  numberstyle=\tiny,
  stepnumber=2,
  numbersep=4pt
}

\lstdefinelanguage{ioaNumsRight}[]{ioa}
{
  numbers=right,
  numberstyle=\tiny,
  stepnumber=2,
  numbersep=4pt
}

\lstnewenvironment{IOA}%
  {\lstset{language=IOA}}
  {}

\lstnewenvironment{IOANums}%
  {
  \if@firstcolumn
    \lstset{language=IOA, numbers=left, firstnumber=auto}
  \else
    \lstset{language=IOA, numbers=right, firstnumber=auto}
  \fi
  }
  {}

\lstnewenvironment{IOANumsRight}%
  {
    \lstset{language=IOA, numbers=right, firstnumber=auto}
  }
  {}


\newcommand{\linefigioa}[9]{

}

\lstdefinelanguage{ioaLang}{%
  basicstyle=\ttfamily\small,
  keywordstyle=\rmfamily\bfseries\small,
  identifierstyle=\small,
  keywords={assumes,automaton,axioms,backward,bounds,by,case,choose,components,const,d,det,discrete,do,eff,else,elseif,ensuring,enumeration,evolve,fi,fire,follow,for,forward,from,hidden,if,in,%
    input,initially,internal,invariant,let, local,od,of,output,pre,schedule,signature,so,%
    simulation,states,variables, tasks, stop,tasks,that,then,to,trajdef,trajectory,trajectories,transitions,tuple,type,union,urgent,uses,when,where,while,yield},
  literate=
        {\\in}{{$\in$}}1
        {\\preceq}{{$\preceq$}}1
        {\\subset}{{$\subset$}}1
        {\\subseteq}{{$\subseteq$}}1
        {\\supset}{{$\supset$}}1
        {\\supseteq}{{$\supseteq$}}1
        {\\rho}{{$\rho$}}1
        {\\infty}{{$\infty$}}1
        {<}{{$<$}}1
        {>}{{$>$}}1
        {=}{{$=$}}1
        {~}{{$\neg$}}1
        {|}{{$\mid$}}1
        {'}{{$^\prime$}}1
        {\\A}{{$\forall$}}1 {\\E}{{$\exists$}}1
        {\\/}{{$\vee$}}1 {/\\}{{$\wedge$}}1
        {=>}{{$\Rightarrow$}}1
        {->}{{$\rightarrow$}}1
        {<=}{{$\leq$}}1 {>=}{{$\geq$}}1 {~=}{{$\neq$}}1
        {\\U}{{$\cup$}}1 {\\I}{{$\cap$}}1
        {|-}{{$\vdash$}}1 {-|}{{$\dashv$}}1
        {<<}{{$\ll$}}2 {>>}{{$\gg$}}2
        {||}{{$\|$}}1
        {<=>}{{$\Leftrightarrow$}}2
        {<->}{{$\leftrightarrow$}}2
        {(+)}{{$\oplus$}}1
        {(-)}{{$\ominus$}}1
}

\lstdefinelanguage{bigIOALang}{%
  basicstyle=\ttfamily,
  keywordstyle=\rmfamily\bfseries,
  identifierstyle=,
  keywords={assumes,automaton,axioms,backward,by,case,choose,components,const,%
    d,det,discrete,do,eff,else,elseif,ensuring,enumeration,evolve,fi,for,forward,from,hidden,if,in%
    input,initially,internal,invariant,local,od,of,output,pre,schedule,signature,so,%
    tasks, simulation,states,stop,tasks,that,then,to,trajdef,trajectories,transitions,tuple,type,union,urgent,uses,when,where,yield},
  literate=
        {\\in}{{$\in$}}1
        {\\preceq}{{$\preceq$}}1
        {\\subset}{{$\subset$}}1
        {\\subseteq}{{$\subseteq$}}1
        {\\supset}{{$\supset$}}1
        {\\supseteq}{{$\supseteq$}}1
        {<}{{$<$}}1
        {>}{{$>$}}1
        {=}{{$=$}}1
        {~}{{$\neg$}}1
        {|}{{$\mid$}}1
        {'}{{$^\prime$}}1
        {\\A}{{$\forall$}}1 {\\E}{{$\exists$}}1
        {\\/}{{$\vee$}}1 {/\\}{{$\wedge$}}1
        {=>}{{$\Rightarrow$}}1
        {->}{{$\rightarrow$}}1
        {<=}{{$\leq$}}1 {>=}{{$\geq$}}1 {~=}{{$\neq$}}1
        {\\U}{{$\cup$}}1 {\\I}{{$\cap$}}1
        {|-}{{$\vdash$}}1 {-|}{{$\dashv$}}1
        {<<}{{$\ll$}}2 {>>}{{$\gg$}}2
        {||}{{$\|$}}1
        {<=>}{{$\Leftrightarrow$}}2
        {<->}{{$\leftrightarrow$}}2
        {(+)}{{$\oplus$}}1
        {(-)}{{$\ominus$}}1
}

\lstnewenvironment{BigIOA}%
  {\lstset{language=bigIOALang,basicstyle=\ttfamily}
   \csname lst@SetFirstLabel\endcsname}
  {\csname lst@SaveFirstLabel\endcsname\vspace{-4pt}\noindent}

\lstnewenvironment{SmallIOA}%
  {\lstset{language=ioaLang,basicstyle=\ttfamily\scriptsize}
   \csname lst@SetFirstLabel\endcsname}
  {\csname lst@SaveFirstLabel\endcsname\noindent}

\newlength{\bracklen}

\newcommand{\tri}[3]{\ensuremath{\mathit{#1}^\mathit{#2}_\mathit{#3}}}

\newcommand{\sugLocalVars}[2]{\ifthenelse{\equal{}{#2}}%
                             {\tri{localVars}{#1}{desug}}%
                             {\tri{localVars}{#1}{#2,desug}}}
\newcommand{\sugVars}[2]{\ifthenelse{\equal{}{#2}}%
                        {\tri{vars}{#1}{desug}}%
                        {\tri{vars}{#1}{#2,desug}}}

\newenvironment{subSyntax}{\begin{array}{l}}{\end{array}}

\newcommand{\ms}[1]{\ifmmode%
\mathord{\mathcode`-="702D\it #1\mathcode`\-="2200}\else%
$\mathord{\mathcode`-="702D\it #1\mathcode`\-="2200}$\fi}



\def\A{{\cal A}} 
\def\T{{\cal T}} 







\lstdefinelanguage{pvs}{
  basicstyle=\tt \figuresize,
  keywordstyle=\sc \figuresize,
  identifierstyle=\it \figuresize,
  emphstyle=\tt \figuresize,
  mathescape=true,
  tabsize=20,
  sensitive=false,
  columns=fullflexible,
  keepspaces=false,
  flexiblecolumns=true,
  basewidth=0.05em,
  moredelim=[il][\rm]{//},
  moredelim=[is][\sf \figuresize]{!}{!},
  moredelim=[is][\bf \figuresize]{*}{*},
  keywords={and,
  	 begin,
  	 cases, const,
  	 do,
  	 external, else, exists, end, endcases, endif,
  	 fi,for, forall, from,
  	 hidden,
  	 in, if, importing,
     let, lambda, lemma,
     measure,
     not,
     or, of,
     return, recursive,
     stop,
     theory, that,then, type, types, type+, to, theorem,
     var,
     with,while},
  emph={nat, setof, sequence, eq, tuple, map, array, enumeration, bool, real, exp, nnreal, posreal},
   literate=
        {(}{{$($}}1
        {)}{{$)$}}1
        {\\in}{{$\in\ $}}1
        {\\mapsto}{{$\rightarrow\ $}}1
        {\\preceq}{{$\preceq\ $}}1
        {\\subset}{{$\subset\ $}}1
        {\\subseteq}{{$\subseteq\ $}}1
        {\\supset}{{$\supset\ $}}1
        {\\supseteq}{{$\supseteq\ $}}1
        {\\forall}{{$\forall$}}1
        {\\le}{{$\le\ $}}1
        {\\ge}{{$\ge\ $}}1
        {\\gets}{{$\gets\ $}}1
        {\\cup}{{$\cup\ $}}1
        {\\cap}{{$\cap\ $}}1
        {\\langle}{{$\langle$}}1
        {\\rangle}{{$\rangle$}}1
        {\\exists}{{$\exists\ $}}1
        {\\bot}{{$\bot$}}1
        {\\rip}{{$\rip$}}1
        {\\emptyset}{{$\emptyset$}}1
        {\\notin}{{$\notin\ $}}1
        {\\not\\exists}{{$\not\exists\ $}}1
        {\\ne}{{$\ne\ $}}1
        {\\to}{{$\to\ $}}1
        {\\implies}{{$\implies\ $}}1
        {<}{{$<\ $}}1
        {>}{{$>\ $}}1
        {=}{{$=\ $}}1
        {~}{{$\neg\ $}}1
        {|}{{$\mid$}}1
        {'}{{$^\prime$}}1
        {\\A}{{$\forall\ $}}1
        {\\E}{{$\exists\ $}}1
        {\\/}{{$\vee\,$}}1
        {\\vee}{{$\vee\,$}}1
        {/\\}{{$\wedge\,$}}1
        {\\wedge}{{$\wedge\,$}}1
        {->}{{$\rightarrow\ $}}1
        {=>}{{$\Rightarrow\ $}}1
        {->}{{$\rightarrow\ $}}1
        {<=}{{$\Leftarrow\ $}}1
        {<-}{{$\leftarrow\ $}}1
        {~=}{{$\neq\ $}}1
        {\\U}{{$\cup\ $}}1
        {\\I}{{$\cap\ $}}1
        {|-}{{$\vdash\ $}}1
        {-|}{{$\dashv\ $}}1
        {<<}{{$\ll\ $}}2
        {>>}{{$\gg\ $}}2
        {||}{{$\|$}}1
        {[}{{$[$}}1
        {]}{{$\,]$}}1
        {[[}{{$\langle$}}1
        {]]]}{{$]\rangle$}}1
        {]]}{{$\rangle$}}1
        {<=>}{{$\Leftrightarrow\ $}}2
        {<->}{{$\leftrightarrow\ $}}2
        {(+)}{{$\oplus\ $}}1
        {(-)}{{$\ominus\ $}}1
        {_i}{{$_{i}$}}1
        {_j}{{$_{j}$}}1
        {_{i,j}}{{$_{i,j}$}}3
        {_{j,i}}{{$_{j,i}$}}3
        {_0}{{$_0$}}1
        {_1}{{$_1$}}1
        {_2}{{$_2$}}1
        {_n}{{$_n$}}1
        {_p}{{$_p$}}1
        {_k}{{$_n$}}1
        {-}{{$\ms{-}$}}1
        {@}{{}}0
        {\\delta}{{$\delta$}}1
        {\\R}{{$\R$}}1
        {\\Rplus}{{$\Rplus$}}1
        {\\N}{{$\N$}}1
        {\\times}{{$\times\ $}}1
        {\\tau}{{$\tau$}}1
        {\\alpha}{{$\alpha$}}1
        {\\beta}{{$\beta$}}1
        {\\gamma}{{$\gamma$}}1
        {\\ell}{{$\ell\ $}}1
        {--}{{$-\ $}}1
        {\\TT}{{\hspace{1.5em}}}3
      }

\lstdefinelanguage{BigPVS}{
  basicstyle=\tt,
  keywordstyle=\sc,
  identifierstyle=\it,
  emphstyle=\tt ,
  mathescape=true,
  tabsize=20,
  sensitive=false,
  columns=fullflexible,
  keepspaces=false,
  flexiblecolumns=true,
  basewidth=0.05em,
  moredelim=[il][\rm]{//},
  moredelim=[is][\sf \figuresize]{!}{!},
  moredelim=[is][\bf \figuresize]{*}{*},
  keywords={and,
  	 begin,
  	 cases, const,
  	 do, datatype,
  	 external, else, exists, end, endif, endcases,
  	 fi,for, forall, from,
  	 hidden,
  	 in, if, importing,
     let, lambda, lemma,
     measure,
     not,
     or, of,
     return, recursive,
     stop,
     theory, that,then, type, types, type+, to, theorem,
     var,
     with,while},
  emph={nat, setof, sequence, eq, tuple, map, array, first, rest, add, enumeration, bool, real, posreal, nnreal},
   literate=
        {(}{{$($}}1
        {)}{{$)$}}1
        {\\in}{{$\in\ $}}1
        {\\mapsto}{{$\rightarrow\ $}}1
        {\\preceq}{{$\preceq\ $}}1
        {\\subset}{{$\subset\ $}}1
        {\\subseteq}{{$\subseteq\ $}}1
        {\\supset}{{$\supset\ $}}1
        {\\supseteq}{{$\supseteq\ $}}1
        {\\forall}{{$\forall$}}1
        {\\le}{{$\le\ $}}1
        {\\ge}{{$\ge\ $}}1
        {\\gets}{{$\gets\ $}}1
        {\\cup}{{$\cup\ $}}1
        {\\cap}{{$\cap\ $}}1
        {\\langle}{{$\langle$}}1
        {\\rangle}{{$\rangle$}}1
        {\\exists}{{$\exists\ $}}1
        {\\bot}{{$\bot$}}1
        {\\rip}{{$\rip$}}1
        {\\emptyset}{{$\emptyset$}}1
        {\\notin}{{$\notin\ $}}1
        {\\not\\exists}{{$\not\exists\ $}}1
        {\\ne}{{$\ne\ $}}1
        {\\to}{{$\to\ $}}1
        {\\implies}{{$\implies\ $}}1
        {<}{{$<\ $}}1
        {>}{{$>\ $}}1
        {=}{{$=\ $}}1
        {~}{{$\neg\ $}}1
        {|}{{$\mid$}}1
        {'}{{$^\prime$}}1
        {\\A}{{$\forall\ $}}1
        {\\E}{{$\exists\ $}}1
        {\\/}{{$\vee\,$}}1
        {\\vee}{{$\vee\,$}}1
        {/\\}{{$\wedge\,$}}1
        {\\wedge}{{$\wedge\,$}}1
        {->}{{$\rightarrow\ $}}1
        {=>}{{$\Rightarrow\ $}}1
        {->}{{$\rightarrow\ $}}1
        {<=}{{$\Leftarrow\ $}}1
        {<-}{{$\leftarrow\ $}}1
        {~=}{{$\neq\ $}}1
        {\\U}{{$\cup\ $}}1
        {\\I}{{$\cap\ $}}1
        {|-}{{$\vdash\ $}}1
        {-|}{{$\dashv\ $}}1
        {<<}{{$\ll\ $}}2
        {>>}{{$\gg\ $}}2
        {||}{{$\|$}}1
        {[}{{$[$}}1
        {]}{{$\,]$}}1
        {[[}{{$\langle$}}1
        {]]]}{{$]\rangle$}}1
        {]]}{{$\rangle$}}1
        {<=>}{{$\Leftrightarrow\ $}}2
        {<->}{{$\leftrightarrow\ $}}2
        {(+)}{{$\oplus\ $}}1
        {(-)}{{$\ominus\ $}}1
        {_i}{{$_{i}$}}1
        {_j}{{$_{j}$}}1
        {_{i,j}}{{$_{i,j}$}}3
        {_{j,i}}{{$_{j,i}$}}3
        {_0}{{$_0$}}1
        {_1}{{$_1$}}1
        {_2}{{$_2$}}1
        {_n}{{$_n$}}1
        {_p}{{$_p$}}1
        {_k}{{$_n$}}1
        {-}{{$\ms{-}$}}1
        {@}{{}}0
        {\\delta}{{$\delta$}}1
        {\\R}{{$\R$}}1
        {\\Rplus}{{$\Rplus$}}1
        {\\N}{{$\N$}}1
        {\\times}{{$\times\ $}}1
        {\\tau}{{$\tau$}}1
        {\\alpha}{{$\alpha$}}1
        {\\beta}{{$\beta$}}1
        {\\gamma}{{$\gamma$}}1
        {\\ell}{{$\ell\ $}}1
        {--}{{$-\ $}}1
        {\\TT}{{\hspace{1.5em}}}3
      }

\lstdefinelanguage{pvsNums}[]{pvs}
{
  numbers=left,
  numberstyle=\tiny,
  stepnumber=2,
  numbersep=4pt
}

\lstdefinelanguage{pvsNumsRight}[]{pvs}
{
  numbers=right,
  numberstyle=\tiny,
  stepnumber=2,
  numbersep=4pt
}

\lstnewenvironment{BigPVS}%
  {\lstset{language=BigPVS}}
  {}

\lstnewenvironment{PVSNums}%
  {
  \if@firstcolumn
    \lstset{language=pvs, numbers=left, firstnumber=auto}
  \else
    \lstset{language=pvs, numbers=right, firstnumber=auto}
  \fi
  }
  {}

\lstnewenvironment{PVSNumsRight}%
  {
    \lstset{language=pvs, numbers=right, firstnumber=auto}
  }
  {}

\newcommand{\linefigpvs}[9]{

}

\lstdefinelanguage{pvsproof}{
  basicstyle=\tt \figuresize,
  mathescape=true,
  tabsize=4,
  sensitive=false,
  columns=fullflexible,
  keepspaces=false,
  flexiblecolumns=true,
  basewidth=0.05em,
}

\def\N{\act{N}}


\newcommand{\localvar}[2]{{{#1_{#2}}}}

\def\xi{\localvar{x}{i}}






\def\reach{{\sf Reach}}










\def\Xi{\mathit{X_i}}



\begin{document}
%

\title{\titlename}

%
%
%

\author{\authortran \inst{1, 2}, \authorstan \inst{3}, \authorweiming \inst{4}, \and \authortaylor \inst{2}}

\authorrunning{Tran et al.}
%


\institute{Department of Computer Science and Engineering, University of Nebraska, USA \and
Department of Electrical Engineering and Computer Science, Vanderbilt University, USA \and
Department of Computer Science, Stony Brook University, USA \and
School of Computer and Cyber Sciences, Augusta University, USA}

%
%
\maketitle


\begin{abstract}
Convolutional Neural Networks (CNN) have redefined state-of-the-art in many real-world applications, such as facial recognition, image classification, human pose estimation, and semantic segmentation. Despite their success, CNNs are vulnerable to adversarial attacks, where slight changes to their inputs may lead to sharp changes in their output in even well-trained networks. Set-based analysis methods can detect or prove the absence of bounded adversarial attacks, which can then be used to evaluate the effectiveness of neural network training methodology. Unfortunately, existing verification approaches have limited scalability in terms of the size of networks that can be analyzed.

In this paper, we describe a set-based framework that successfully deals with real-world CNNs, such as VGG16 and VGG19, that have high accuracy on ImageNet. Our approach is based on a new set representation called the ImageStar, which enables efficient exact and over-approximative analysis of CNNs. ImageStars perform efficient set-based analysis by combining operations on concrete images with linear programming (LP). Our approach is implemented in a tool called NNV, and can verify the robustness of VGG networks with respect to a small set of input states, derived from adversarial attacks, such as the DeepFool attack. The experimental results show that our approach is less conservative and faster than existing zonotope methods, such as those used in DeepZ, and the polytope method used in DeepPoly.


\end{abstract}

\section{Introduction}


%
%

Convolutional neural networks (CNN) have rapidly accelerated progress in computer vision with many practical applications such as face recognition \cite{lawrence1997face}, image classification \cite{krizhevsky2012imagenet}, document analysis \cite{lecun1998gradient} and semantic segmentation. Recently, it has been shown that CNNs are vulnerable to adversarial attacks, where a well-trained CNN can be fooled into producing errant predictions due to tiny changes in their inputs \cite{goodfellow2014explaining}. Many applications such as autonomous driving seek to leverage the power of CNNs. However due the opaque nature of these models there are reservations about using in safety-critical applications. Thus, there is an urgent need for formally evaluating the robustness of a trained CNN.

The formal verification of deep neural networks (DNNs) has recently become a hot topic. The majority of the existing approaches focus on verifying safety and robustness properties of feedforward neural networks (FNN) with the Rectified Linear Unit activation function (ReLU). These approaches include: mixed-integer linear programming (MILP)~\cite{lomuscio2017approach,kouvaros2018formal,dutta2017output}, satisfiability (SAT) and satisfiability modulo theory (SMT) techniques \cite{katz2017reluplex}, optimization \cite{weng2018towards,zhang2018efficient,lin2019robustness,hein2017formal,dvijotham2018dual,wu2019game}, and geometric reachability \cite{tran2019parallel,wang2018formal,xiang2018output,xiang2017reachable,xiang2018specification,singh2019abstract,singh2018fast,tran2019fm, wong2017provable, xiaodong2020facelatice}. Adjacent to these methods are property inference techniques for DNNs, which are also an important and interesting research area investigated in \cite{8952519}. In a similar fashion,  the problem of verifying the safety of cyber-physical systems (CPS) with learning-enabled neural network components with imperfect plant models and sensing information~\cite{tran2019emsoft,souradeep2019,xiang2018reachable,sun2018formal,ivanov2018verisig, xiang2020tnnls, ivanov2020case, huang2019reachnn, akintunde2020formal, manzanas2019arch} has recently attracted significant attention due to their real world applications. This research area views the safety verification problem in a more holistic manner by considering the safety of the whole system in which learning-enabled components interact with the physical world.

Although numerous tools have been proposed for neural network verification, only a handful of methods can deal with CNNs \cite{kouvaros2018formal,singh2019abstract,singh2018fast,katz2019marabou,ruan2018global,anderson2019pldi}. Moreover, in the aforementioned techniques, only one \cite{ruan2018global} can deal with real-world CNNs, such as VGGNet \cite{simonyan2014very}. Their approach makes used of the concept of the $L_0$ distance between two images. Their optimization-based approach computes a tight bound on the number of pixels that may be changed in an image without affecting the classification result of the network. It can also efficiently generate adversarial examples that can be used to improve the robustness of network. In a similar manner, this paper seeks to verify the robustness of real-world deep CNNs. Thus, we propose a set-based analysis method through the use of the \emph{ImageStar}, a new set representation that can represent an infinite number of images. As an example, this representation can be used to represent a set of images distorted by an adversarial attack. Using the ImageStar, we propose both exact and over-approximate reachability algorithms to construct reachable sets that contain all the possible outputs of a CNN under an adversarial attack. These reachable sets are then used to reason about the overall robustness of the network. When a CNN violates a robustness property, our exact reachability scheme can construct a \emph{set of concrete adversarial examples}. Our approach differs from \cite{ruan2018global} in two ways. First, our method does not provide robustness guarantees for a network in terms of the number of pixels that are allowed to be changed (in terms of $L_0$ distance). Instead, we prove the robustness of the network on images that are attacked by disturbances bounded by arbitrary linear constraints. Second, our approach relies on reachable set computation of a network corresponding to a bounded input set, as opposed to a purely optimization-based approach.

We implement the proposed method in a tool called NNV and compare it with the zonotope method used in DeepZ \cite{singh2018fast} and the polytope method used in DeepPoly \cite{singh2019abstract}.
The experimental results show that our method is less conservative and faster than any of these approaches when verifying the robustness of CNNs.
The main contributions of the paper are as follows.
\vspace{-0.75em}%
\begin{itemize}
	\item The ImageStar set representation, which is an efficient representation for reachability analysis of CNNs.
	\item The provision of exact and over-approximate reachability algorithms for constructing reachable sets and verifying robustness of CNNs.
	\item The implementation of the ImageStar representation and reachability algorithms in NNV \cite{tran2020cav_tool}.
    \item Rigorous evaluation and comparison of proposed approaches, such as zonotope \cite{singh2018fast} and polytope \cite{singh2019abstract} methods on different CNNs.
\end{itemize}

\section{Problem formulation}

The reachability problem for CNNs is the task of analyzing a trained CNN with respect to some perturbed input set in order to construct a set containing all possible outputs of the network. In this paper, we consider the reachability of a CNN $\mathcal{N}$ that consists of a series of layers $L$ that may include convolutional layers, fully connected layers, max-pooling layers, average pooling layers, and ReLU activation layers. Mathematically, we define a CNN with $n$ layers as $\mathcal{N} = \{L_i\}, i=1,2,\dots, n$. The reachability of the CNN $\mathcal{N}$ is defined based on the concept of \emph{reachable sets}.

\begin{definition}[Reachable set of a CNN]\label{def:reachable set}
An (output) reachable set $\mathcal{R}_{\mathcal{N}}$ of a CNN $\mathcal{N} = \{L_i\}, i=1,2,\dots, n$ corresponding to a linear input set $\mathcal{I}$ is defined incrementally as:%
\begin{equation*}%
    \begin{split}
      \mathcal{R}_{L_{1}} &\triangleq \{y_1~|~y_1 = L_1(x),~ x \in \mathcal{I}\}, \\
      \mathcal{R}_{L_{2}} &\triangleq \{y_2~|~y_2 = L_2(y_1),~ y_1 \in \mathcal{R}_{L_1}\}, \\
     &\vdots \\
    \mathcal{R}_{\mathcal{N}} = \mathcal{R}_{L_n} &\triangleq \{y_n ~|~ y_n = L_3(y_{n-1})\,~y_{n-1} \in \mathcal{R}_{L_{n-1}} \}, \\
    \end{split}
  \end{equation*}
where $L_i(\cdot)$ is a function representing the operation of the $i^{th}$ layer.
\end{definition}

The definition shows that the reachable set of the CNN $\mathcal{N}$ can be constructed \emph{layer-by-layer}. The core computation is constructing the reachable set of each layer $L_i$ defined by a specific operation, i.e., convolution, affine mapping, max pooling, average pooling, or ReLU.



\section{ImageStar}

\begin{definition}\label{def:star}
\textbf{An ImageStar} $\Theta$ is a tuple $\langle c, V, P \rangle$ where $c \in \mathbb{R}^{h\times w \times nc}$ is the anchor image, $V = \{v_1, v_2, \cdots, v_m\}$ is a set of m images in $\mathbb{R}^{h \times w \times nc}$ called generator images, $P: \mathbb{R}^m \to \{ \top, \bot\}$ is a predicate, and $h, w, nc$ are the height, width and number of channels of the images respectively. The generator images are arranged to form the ImageStar's $h\times w \times nc \times m$ basis array. The set of images represented by the ImageStar is given as:
\begin{equation*}
 \llbracket \Theta \rrbracket = \{x~|~x = c + \Sigma_{i=1}^m(\alpha_iv_i)~\text{such that}~P(\alpha_1, \cdots, \alpha_m) = \top \}.
\end{equation*}
Sometimes we will refer to both the tuple $\Theta$ and the set of states $\llbracket \Theta \rrbracket$ as $\Theta$. In this work, we restrict the predicates to be a conjunction of linear constraints, $P(\alpha) \triangleq C\alpha \leq d$ where, for $p$ linear constraints, $C \in \mathbb{R}^{p \times m}$, $\alpha$ is the vector of $m$-variables, i.e., $\alpha = [\alpha_1, \cdots, \alpha_m]^T$, and $d \in \mathbb{R}^{p \times 1}$. A ImageStar is an empty set if and only if $P(\alpha)$ is empty.

\end{definition}

\begin{example}[ImageStar]\label{ex1}
A $4\times 4 \times 1$ gray image with a bounded disturbance $ b \in [-2, 2]$ applied on the pixel of the position $(1,2,1)$ can be described as an ImageStar depicted in Figure \ref{fig:ImageStar-example}.
\end{example}
\begin{figure}[t]
  \centering
      \includegraphics[width=0.6\textwidth]{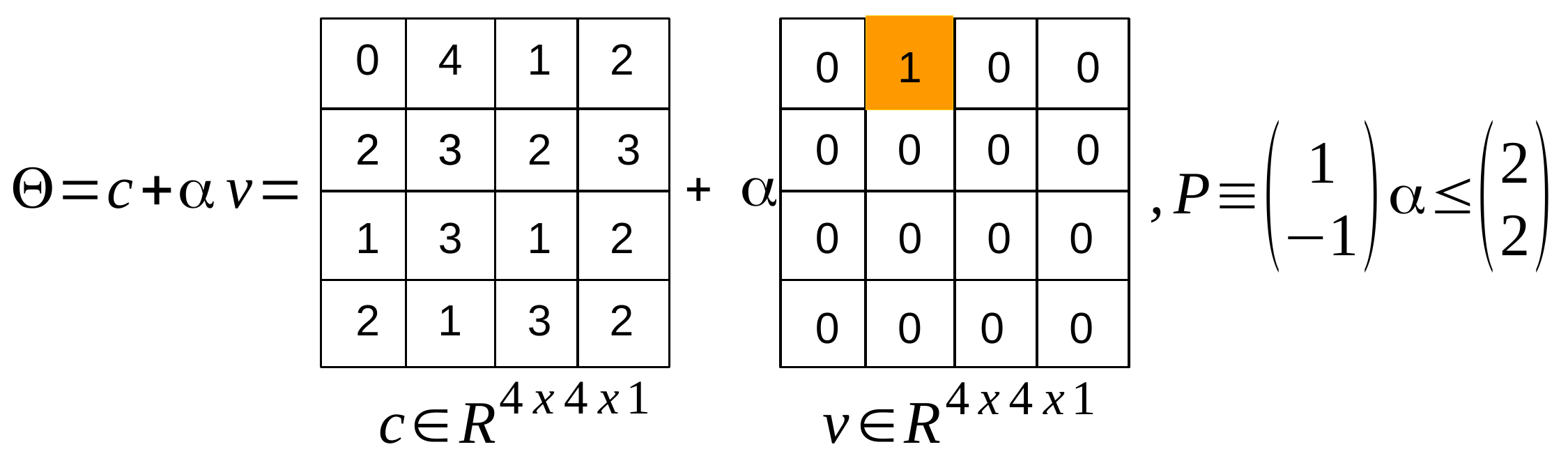}
  \caption{An example of an ImageStar.}
\vspace{-1em}
  \label{fig:ImageStar-example}
\end{figure}

\begin{remark}
An ImageStar is an extension of the generalized star set recently defined in \cite{bak2017simulation,bak2019numerical,tran2019formats,tran2019fm}. In a generalized star set, the anchor and the generators are vectors, while in an ImageStar, the anchor and generators are images with multiple channels. We will later show that the ImageStar is a very efficient representation for the reachability analysis of convolutional layers, fully connected layers, and average pooling layers.
\end{remark}

\begin{proposition}[Affine mapping of an ImageStar]  An affine mapping of an ImageStar $\Theta = \langle c, V, P \rangle $ with a scale factor $\gamma$ and an offset image $\beta$ is another ImageStar $\Theta^{\prime} = \langle c^{\prime}, V^{\prime}, P^{\prime} \rangle $ in which the new anchor, generators and predicate are as follows.
\begin{equation*}
c^{\prime} = \gamma \times c + \beta, ~~V^{\prime} = \gamma \times V, ~~ P^{\prime} \equiv P.
\end{equation*}
Note that, the scale factor $\gamma$ can be a scalar or  a vector containing scalar scale factors in which each factor is used to scale one channel in the ImageStar.
\end{proposition}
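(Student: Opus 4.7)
The plan is to prove set equality $\{\gamma \times x + \beta \mid x \in \llbracket \Theta \rrbracket\} = \llbracket \Theta' \rrbracket$ by direct substitution, exploiting the fact that affine mapping is linear in the image and therefore distributes over the linear combination $c + \sum_i \alpha_i v_i$ that defines an ImageStar.

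First I would fix an arbitrary $x \in \llbracket \Theta \rrbracket$. By Definition~\ref{def:star}, there exists $\alpha = [\alpha_1,\dots,\alpha_m]^T$ with $P(\alpha) = \top$ such that $x = c + \sum_{i=1}^m \alpha_i v_i$. Applying the affine map (interpreting $\gamma \times (\cdot)$ as scalar or channel-wise multiplication, which is a linear operator on the image tensor) gives
\begin{equation*}
\gamma \times x + \beta \;=\; \gamma \times \Bigl(c + \sum_{i=1}^m \alpha_i v_i\Bigr) + \beta \;=\; (\gamma \times c + \beta) + \sum_{i=1}^m \alpha_i (\gamma \times v_i) \;=\; c' + \sum_{i=1}^m \alpha_i v'_i,
\end{equation*}
using distributivity of the (channel-wise) scaling over image addition and the fact that the $\alpha_i$ are scalars. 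Since $P' \equiv P$, the same $\alpha$ witnesses membership of $\gamma \times x + \beta$ in $\llbracket \Theta' \rrbracket$, establishing the forward inclusion.

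For the reverse inclusion, I would take any $y \in \llbracket \Theta' \rrbracket$, so $y = c' + \sum_i \alpha_i v'_i$ with $P'(\alpha) = P(\alpha) = \top$. Substituting the definitions of $c'$ and $v'_i$ and factoring out $\gamma$ yields $y = \gamma \times (c + \sum_i \alpha_i v_i) + \beta = \gamma \times x + \beta$ where $x \in \llbracket \Theta \rrbracket$ is the ImageStar element generated by the same coefficients $\alpha$. This shows $y$ lies in the image of $\llbracket \Theta \rrbracket$ under the affine map.

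The only non-routine point, and the place I would be most careful, is making explicit that when $\gamma$ is a per-channel vector rather than a scalar, the operation $\gamma \times (\cdot)$ is still a linear map on the image tensor (it acts as multiplication by a fixed diagonal operator in the channel dimension), so the two identities $\gamma \times (A+B) = \gamma \times A + \gamma \times B$ and $\gamma \times (\alpha_i v_i) = \alpha_i (\gamma \times v_i)$ both hold and the derivation above goes through unchanged. Once this is observed, the two inclusions combine to give $\{\gamma \times x + \beta \mid x \in \llbracket \Theta \rrbracket\} = \llbracket \Theta' \rrbracket$, completing the proof.
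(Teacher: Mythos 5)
Your proof is correct and matches the approach the paper relies on: the paper states this proposition without an explicit proof, but the same distributivity-of-a-linear-map-over-the-combination $c+\sum_i\alpha_i v_i$ argument is exactly what its proofs of Lemmas~\ref{lm:convolutional layer} and~\ref{lm:fully-connected-layer} invoke. Your two-inclusion write-up, including the remark that per-channel scaling is still a linear (diagonal) operator, is a complete and faithful formalization of that argument.
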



\section{Reachability of CNN using ImageStars}
In this section, we present the reachable set computation for the convolutional layer, average pooling layer, fully connected layer, batch normalization layer, max pooling layer, and the ReLU layer with respect to an input set consisting of an ImageStar. 

\subsection{Reachability of a convolutional layer}
We consider a two-dimensional convolutional layer with following parameters: the weights $W_{Conv2d} \in \mathbb{R}^{h_f \times w_f \times nc \times nf}$, the bias $b_{Conv2d} \in \mathbb{R}^{1\times 1\times nf}$, the padding size $P$, the stride $S$, and the dilation factor $D$ where $h_f, w_f, nc$ are the height, width, and the number of channels of the filters in the layer respectively. Additionally, $nf$ is the number of filters. The reachability of a convolutional layer is given in the following lemma.
\begin{lemma}\label{lm:convolutional layer}
The reachable set of a convolutional layer with an ImageStar input set $\mathcal{I} = \langle c, V, P \rangle$ is another ImageStar $\mathcal{I}^{\prime} = \langle c^{\prime}, V^{\prime}, P\rangle$ where $c^{\prime} = Convol(c)$ is the convolution operation applied to the anchor image, $V^{\prime} = \{v_1^{\prime}, \dots, v_m^{\prime} \}, v_i^{\prime} = ConvolZeroBias(v_i)$ is the convolution operation with zero bias applied to the generator images, i.e., only using the weights of the layer.
\end{lemma}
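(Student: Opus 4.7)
The plan is to exploit the fact that a convolutional layer acts as an affine map on its input: for any input image $x$, the output is $Convol(x) = W_{Conv2d} \star x + b_{Conv2d}$, where $\star$ denotes the windowed correlation with the specified padding $P$, stride $S$, and dilation $D$. Since both the correlation with a fixed filter bank and the bias addition are linear/affine in the pixel values, I would first record a small identity by distributivity:
\begin{equation*}
Convol\!\left(c + \sum_{i=1}^m \alpha_i v_i\right) = Convol(c) + \sum_{i=1}^m \alpha_i\, ConvolZeroBias(v_i),
\end{equation*}
i.e., the bias is applied exactly once (it goes into the anchor), and each generator is carried through the purely linear correlation with no bias.

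Next, given an ImageStar $\mathcal{I} = \langle c, V, P\rangle$, every image in $\llbracket \mathcal{I}\rrbracket$ has the form $x = c + \sum_{i=1}^m \alpha_i v_i$ with $P(\alpha_1,\dots,\alpha_m) = \top$. Applying the identity above yields $Convol(x) = c' + \sum_{i=1}^m \alpha_i v_i'$ with $c' = Convol(c)$ and $v_i' = ConvolZeroBias(v_i)$, which is exactly an element of $\llbracket \mathcal{I}'\rrbracket$ for the same coefficient vector $\alpha$. I would then check both inclusions: forward, every layer output lies in $\llbracket \mathcal{I}'\rrbracket$ by the identity just shown; backward, every element of $\llbracket \mathcal{I}'\rrbracket$ is realized by the input $x = c + \sum_i \alpha_i v_i$ built from the same $\alpha$, which satisfies the original predicate. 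Because $P$ constrains only the coefficients and not the images themselves, the predicate transfers verbatim, so $P' \equiv P$.

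The main technical care is bookkeeping rather than mathematics: the bias tensor $b_{Conv2d} \in \mathbb{R}^{1\times 1\times nf}$ must appear only in $c'$ and must not be duplicated across the generators, which is why $ConvolZeroBias$ is used on each $v_i$. I would also want to confirm that the output basis array $V'$ has the expected shape $h' \times w' \times nf \times m$, where $h'$ and $w'$ are determined in the usual way from $h, w, h_f, w_f, P, S, D$, so that $\mathcal{I}'$ is a well-formed ImageStar in the sense of Definition~\ref{def:star}. I expect the harder part, if anything, will be making this shape and parameter bookkeeping precise; once that is in place, the result follows from a single application of linearity of the convolution operator, and the predicate $P$ is simply inherited.
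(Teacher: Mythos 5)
Your proof is correct and follows essentially the same route as the paper's: both arguments rest on the affinity of the convolution operator and the observation that, by linearity, the bias is absorbed once into the anchor while each generator passes through the bias-free correlation. Your version is somewhat more explicit (stating the distributivity identity, checking both set inclusions, and noting that $P$ transfers because it constrains only the coefficients), but the underlying idea is identical to the paper's proof.
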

\begin{proof}
Any image in the ImageStar input set is a \emph{linear} combination of the center and basis images. For any filter in the layer, the convolution operation applied to the input image performs local element-wise multiplication of a local matrix (of all channels) containing the values of the local pixels of the image and the the weights of the filter and then combine the result with the bias to get the output for that local region. Due to the linearity of the input image, we can perform the convolution operation with the bias on the center and the convolution operation with zero bias on the basis images and then combine the result to get the output image.
\end{proof}

\begin{example}[Reachable set of a convolutional layer]
The reachable set of a convolutional layer with single $2\times 2$ filter and the ImageStar input set in Example \ref{ex1} is described in Figure \ref{fig:reach-conv2d}, where the weights and the bias of the filter are $W = \begin{bmatrix} 1 & 1 \\ -1 & 0 \\ \end{bmatrix}$,  $b = -1$ respectively, the stride is $S = [2~2]$, the padding size is $P = [0~0~0~0]$ and the dilation factor is $D = [1~1]$.
\end{example}
\begin{figure}[]
  \centering
      \includegraphics[width=0.6\textwidth]{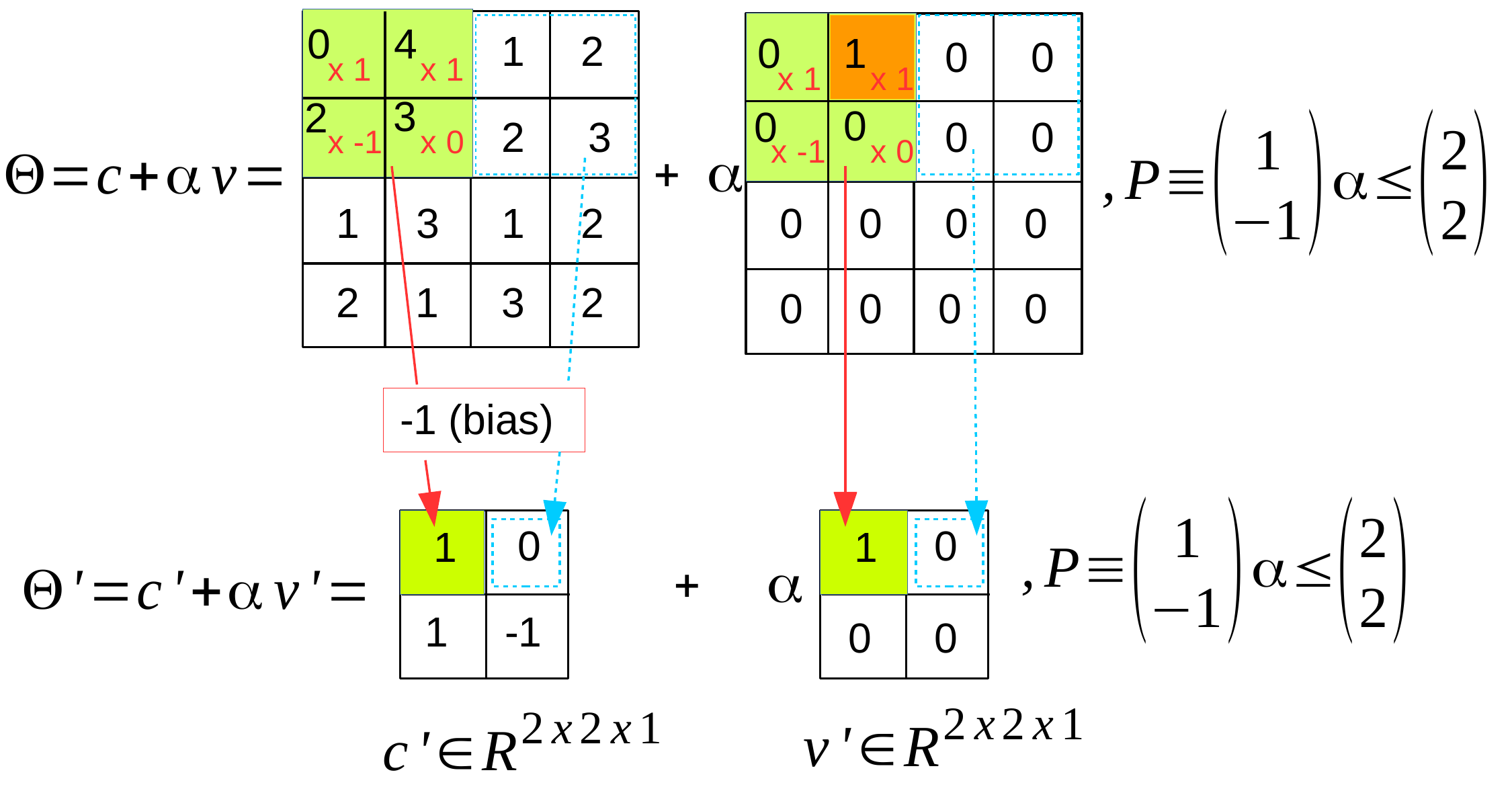}
  \caption{Reachability of convolutional layer using ImageStar.}
  \label{fig:reach-conv2d}
  \vspace{-2em}
\end{figure}

\subsection{Reachability of an average pooling layer}
The reachability of an average pooling layer with pooling size $PS$, padding size $P$, and stride $S$ is given below, with its proof similar to that of the convolutional layer.
\begin{lemma}\label{lm:average pooling}
The reachable set of a average pooling layer with an ImageStar input set $\mathcal{I} = \langle c, V, P \rangle$ is another ImageStar $\mathcal{I}^{\prime} = \langle c^{\prime}, V^{\prime}, P\rangle$ where $c^{\prime} = average(c)$, $V^{\prime} = \{v_1^{\prime}, \dots, v_m^{\prime} \}, v_i^{\prime} = average(v_i)$, $average(\cdot)$ is the average pooling operation applied to the anchor and generator images.
\end{lemma}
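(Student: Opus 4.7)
The plan is to reduce the claim to the linearity of the average pooling operation, mirroring the structure used for the convolutional layer in Lemma~1. Once the pooling window size $PS$, padding size $P$, and stride $S$ are fixed, $average(\cdot)$ is a deterministic linear function from $\mathbb{R}^{h \times w \times nc}$ to $\mathbb{R}^{h' \times w' \times nc}$: each output pixel is the arithmetic mean of a specified subset of input pixels (with padded positions treated as zeros), so it is a fixed linear combination of input entries with no additive bias. This is the exact analogue of the observation that drove Lemma~1, only simpler because there is no filter weight and no bias to carry along.

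The first step would be to record linearity explicitly, i.e., $average(\lambda_1 x_1 + \lambda_2 x_2) = \lambda_1 \cdot average(x_1) + \lambda_2 \cdot average(x_2)$ for all scalars and input tensors of matching shape. Then I would take a generic element $x \in \llbracket \mathcal{I} \rrbracket$, written as $x = c + \sum_{i=1}^m \alpha_i v_i$ with $P(\alpha_1,\dots,\alpha_m) = \top$, and push $average$ through the sum to obtain $average(x) = average(c) + \sum_{i=1}^m \alpha_i \cdot average(v_i) = c' + \sum_{i=1}^m \alpha_i v_i'$. The same predicate $P$ on the same variables $\alpha$ witnesses membership in $\llbracket \mathcal{I}' \rrbracket$, which gives the forward inclusion. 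For the reverse inclusion, any element of $\llbracket \mathcal{I}' \rrbracket$ is by construction the $average$-image of the corresponding $x \in \llbracket \mathcal{I} \rrbracket$ obtained from the same $\alpha$, yielding set equality.

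There is no substantive obstacle to anticipate; the distinction from Lemma~1 is essentially cosmetic. Unlike convolution, average pooling carries no bias term, so the anchor expression does not split into a ``with bias / without bias'' pair as it did there. The only point worth explicitly checking is that zero-padding commutes with linear combinations, which it trivially does since padding is itself a coordinate-wise linear embedding of $\mathbb{R}^{h \times w \times nc}$ into a larger tensor space. I would flag this in one sentence rather than belabor it. Once linearity is in hand, the remainder of the proof is a single algebraic manipulation, completely parallel to the convolutional case.
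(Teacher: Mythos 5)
Your proof is correct and follows essentially the same route as the paper, which simply invokes linearity of the pooling operation by analogy with the convolutional-layer proof (there the paper decomposes convolution into ``with bias on the anchor, zero bias on the generators''; average pooling has no bias, so only the pure linearity argument remains). Your explicit treatment of both inclusions and of padding commuting with linear combinations is a mild strengthening of what the paper leaves implicit, not a different method.
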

%
%
\begin{example}[Reachable set of an average pooling layer]
The reachable set of an $2\times 2$ average pooling layer with padding size $P = [0~0~0~0]$, stride $S = [2~2]$, and an ImageStar input set given by Example \ref{ex1} is shown in Figure \ref{fig:reach-average-pooling}.
\end{example}
\begin{figure}[]%
\vspace{-1em}%
  \centering%
      \includegraphics[width=0.6\textwidth]{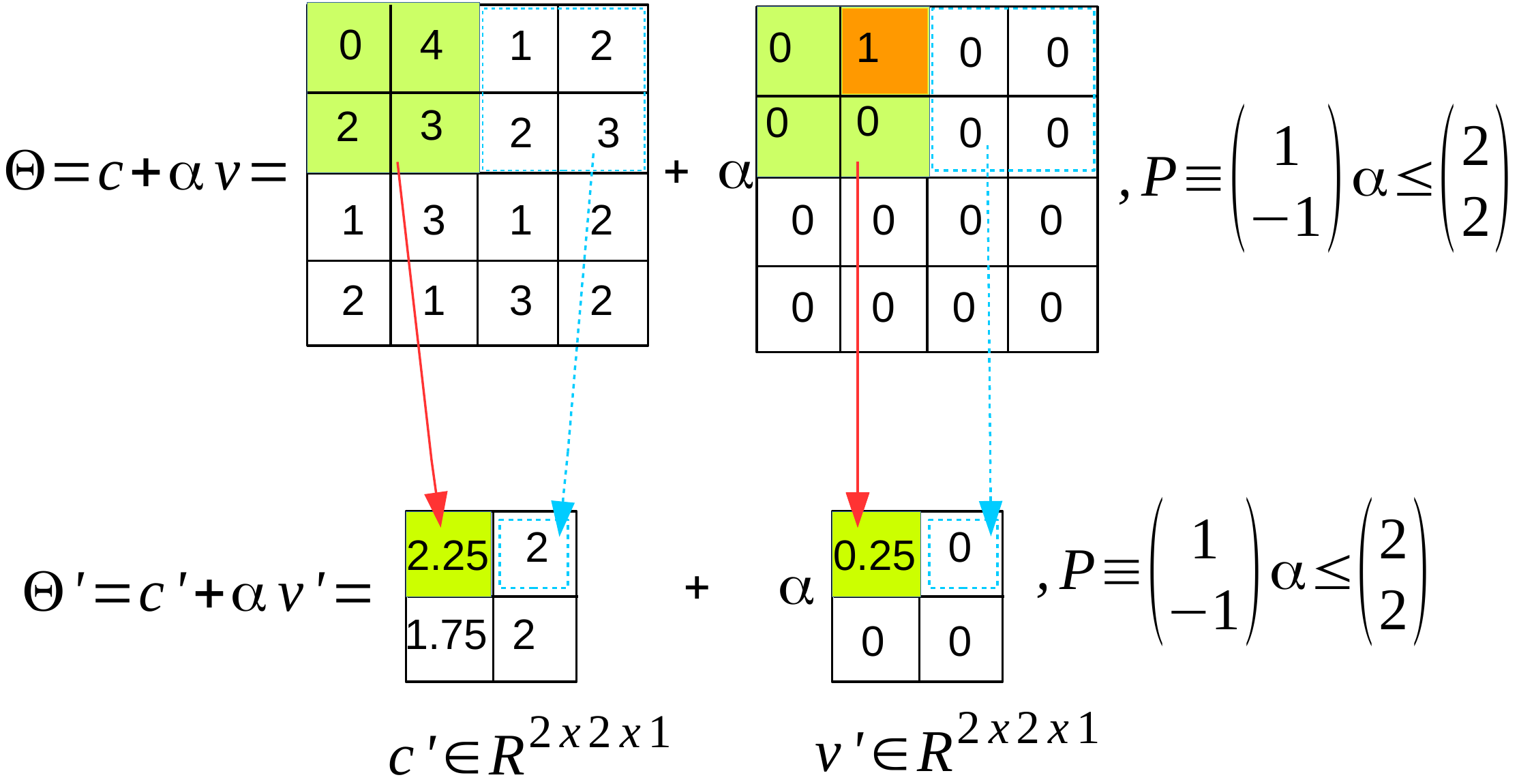}%
  \caption{Reachability of average pooling layer using ImageStar.}%
  \label{fig:reach-average-pooling}%
  \vspace{-2em}%
\end{figure}

\subsection{Reachability of a fully connected layer}
The reachability of a fully connected layer is stated in the following lemma.
\begin{lemma}\label{lm:fully-connected-layer}
 Given a two-dimensional fully connected layer with weight $W_{fc} \in \mathbb{R}^{n_{fc} \times m_{fc}}$, bias $b_{fc} \in \mathbb{R}^{n_{fc}}$, and an ImageStar input set  $\mathcal{I} = \langle c, V, P \rangle$, the reachable set of the layer is another ImageStar $\mathcal{I}^{\prime} = \langle c^{\prime}, V^{\prime}, P\rangle$ where $c^{\prime} = W*\bar{c} + b$, $V^{\prime} = \{v_1^{\prime}, \dots, v_m^{\prime} \}, v_i^{\prime} = W_{fc}*\bar{v}_i$, $\bar{c}(\bar{v_i}) = reshape(c(v_i), [m_{fc}, 1])$. Note that it is required for consistency between the ImageStar and the weight matrix that $m_{fc} = h\times w \times nc$, where $h, w, nc$ are the height, width and number of channels of the ImageStar.
\end{lemma}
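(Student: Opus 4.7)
The plan is to follow the same linearity argument used for the convolutional and average pooling layers, adapted to account for the reshape that converts a multi-channel image into a column vector. The underlying reason the claim holds is that a fully connected layer is nothing more than an affine map $x \mapsto W_{fc} x + b_{fc}$, reshape is itself a linear bijection on $\mathbb{R}^{h\times w\times nc} \to \mathbb{R}^{m_{fc}}$ when $m_{fc} = h\times w\times nc$, and the composition of an affine map with a linear map is still affine.

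First, I would observe that the dimensional-consistency hypothesis $m_{fc} = h\times w\times nc$ is exactly what is needed for $\bar{x} := \text{reshape}(x, [m_{fc}, 1])$ to be a well-defined element of $\mathbb{R}^{m_{fc}}$, so that $W_{fc} \bar{x} + b_{fc}$ makes sense for every $x \in \llbracket \Theta \rrbracket$. Then I would take an arbitrary $x \in \llbracket \Theta \rrbracket$, which by Definition~\ref{def:star} has the form $x = c + \sum_{i=1}^m \alpha_i v_i$ for some $\alpha = (\alpha_1,\ldots,\alpha_m)$ satisfying $P(\alpha) = \top$. Using linearity of reshape, this gives
\begin{equation*}
\bar{x} = \bar{c} + \sum_{i=1}^m \alpha_i \bar{v}_i.
\end{equation*}

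Next I would apply the affine map of the layer and use linearity of matrix multiplication to obtain
\begin{equation*}
W_{fc}\bar{x} + b_{fc} = \bigl(W_{fc}\bar{c} + b_{fc}\bigr) + \sum_{i=1}^m \alpha_i \bigl(W_{fc} \bar{v}_i\bigr) = c' + \sum_{i=1}^m \alpha_i v_i',
\end{equation*}
with $c'$ and $v_i'$ exactly as defined in the statement. Since $\alpha$ still satisfies the same predicate $P$, the output lies in $\llbracket \mathcal{I}' \rrbracket$, giving the inclusion $\llbracket L(\mathcal{I}) \rrbracket \subseteq \llbracket \mathcal{I}'\rrbracket$. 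For the reverse inclusion, I would note that each point of $\mathcal{I}'$ is produced by some $\alpha$ with $P(\alpha) = \top$, and the corresponding preimage $x = c + \sum_i \alpha_i v_i$ lies in $\llbracket \Theta \rrbracket$ and maps to that point, yielding equality of the two sets.

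The proof contains no genuine obstacle; the whole content is linearity plus bookkeeping. The only point that deserves a sentence of care is the reshape step, since \emph{a priori} the FC layer is defined on vectors while ImageStar generators are multi-channel tensors. I would therefore state explicitly that reshape is $\mathbb{R}$-linear (it is just a reindexing of coordinates), which is what licenses the commutation $\overline{c + \sum_i \alpha_i v_i} = \bar c + \sum_i \alpha_i \bar v_i$ used above. This mirrors the structure of the convolutional-layer proof in Lemma~\ref{lm:convolutional layer}, with the convolution replaced by the affine map $W_{fc}(\cdot) + b_{fc}$.
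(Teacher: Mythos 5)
Your proof is correct and follows essentially the same route as the paper's: decompose each image in the ImageStar as a linear combination of the anchor and generators, then use linearity of the affine map to push the bias onto the anchor and the bias-free linear map onto the generators. Your additional care about the linearity of reshape and the reverse inclusion only makes explicit what the paper's brief proof leaves implicit.
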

\begin{proof}
Similar to the convolutional layer and  the average pooling layer, for any image in the ImageStar input set, the fully connected layer performs an affine mapping of the input image which is a linear combination of the center and the basis images of the ImageStar. Due to the linearity, the affine mapping of the input image can be decomposed into the affine mapping of the center image and the affine mapping without the bias of the basis images. The final result is the sum of the individual affine maps.
\end{proof}

\subsection{Reachability of a batch normalization layer}
%
In the prediction phase, a batch normalization layer normalizes each input channel $x_i$ using the mean $\micro$ and variance $\sigma^2$ over the full training set.
Then the batch normalization layer further shifts and scales the activations using the offset $\beta$ and the scale factor $\gamma$ that are learnable parameters.
The formula for normalization is as follows.
\begin{equation*}
\bar{x}_i = \frac{x_i - \micro}{\sqrt{\sigma^2 + \epsilon}}, ~~ y_i = \gamma \bar{x}_i + \beta.
\end{equation*}
where $\epsilon$ is a used to prevent division by zero. The batch normalization layer can be described as a tuple $\mathcal{B} = \langle \micro, \sigma^2, \epsilon, \gamma, \beta \rangle$.
The reachability of a batch normalization layer with an ImageStar input set is given in the following lemma.
\begin{lemma}
  The reachable set of a batch normalization layer $\mathcal{B} = \langle \micro, \sigma^2, \epsilon, \gamma, \beta \rangle$ with an ImageStar input set $\mathcal{I} = \langle c, V, P \rangle$ is another ImageStar $\mathcal{I}^{\prime} = \langle c^{\prime}, V^{\prime}, P^{\prime}\rangle$ where:
\begin{equation*}
c^{\prime} = \frac{\gamma}{\sqrt{\sigma^2 + \epsilon}}c + \beta - \frac{\gamma}{\sqrt{\sigma^2 + \epsilon}}\micro,~~V^{\prime} = \frac{\gamma}{\sqrt{\sigma^2 + \epsilon}}V, ~~P^{\prime} \equiv P.
\end{equation*}
\end{lemma}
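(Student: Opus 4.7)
The plan is to reduce the batch normalization layer, evaluated in the prediction phase, to a single affine map on each channel and then appeal to the affine-mapping proposition for ImageStars already proved earlier in the paper.

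First, I would observe that in prediction mode the parameters $\mu, \sigma^2, \epsilon, \gamma, \beta$ are fixed constants (not data-dependent), so the two successive operations given in the definition compose into a single affine transformation. Concretely, for each channel $i$,
\begin{equation*}
y_i = \gamma_i \bar{x}_i + \beta_i = \gamma_i\,\frac{x_i - \mu_i}{\sqrt{\sigma_i^2 + \epsilon}} + \beta_i = \frac{\gamma_i}{\sqrt{\sigma_i^2 + \epsilon}}\, x_i + \Bigl(\beta_i - \frac{\gamma_i}{\sqrt{\sigma_i^2 + \epsilon}}\mu_i\Bigr).
\end{equation*}
Thus, stacking across channels, the layer implements $y = \gamma' \cdot x + \beta'$, where $\gamma' = \gamma / \sqrt{\sigma^2+\epsilon}$ is a vector of per-channel scale factors and $\beta' = \beta - \gamma' \cdot \mu$ is the induced per-channel offset image.

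Next, I would invoke the affine mapping proposition, which states that applying a scale factor $\gamma'$ (scalar or per-channel) and offset image $\beta'$ to an ImageStar $\langle c, V, P\rangle$ yields the ImageStar $\langle \gamma' c + \beta',\; \gamma' V,\; P\rangle$. Substituting $\gamma' = \gamma/\sqrt{\sigma^2+\epsilon}$ and $\beta' = \beta - (\gamma/\sqrt{\sigma^2+\epsilon})\mu$ gives exactly the claimed $c'$, $V'$, and $P' \equiv P$.

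There is no real obstacle here; the only point that requires any care is verifying that the per-channel nature of $\mu, \sigma^2, \gamma, \beta$ is compatible with the vector-valued scale factor allowed by the affine mapping proposition (which the remark following that proposition explicitly permits). Once this channel-wise compatibility is noted, the result follows immediately from the composition argument and a direct application of affine mapping.
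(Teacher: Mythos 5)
Your proof is correct and follows essentially the same route as the paper, which simply notes that the result is obtained ``using two affine mappings of the ImageStar input set'' via the affine-mapping proposition. The only cosmetic difference is that you compose the normalization and the scale-and-shift into a single affine map before applying that proposition once, rather than applying it twice in succession; your version is in fact more explicit about the per-channel compatibility than the paper's one-line proof.
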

\begin{proof}
The reachable set of a batch normalization layer can be obtained in a straightforward fashion using two affine mappings of the ImageStar input set.
\end{proof}

\subsection{Reachability of a max pooling layer}
Reachability of max pooling layer with an ImageStar input set is challenging because the value of each pixel in an image in the ImageStar depends on the predicate variables $\alpha_i$. Therefore, the local max point when applying max-pooling operation may change with the values of the predicate variables. In this section, we investigate the exact reachability and over-approximate reachability of a max pooling layer with an ImageStar input set. The first obtains the exact reachable set while the second constructs an over-approximate reachable set.

\subsubsection{Exact reachability of a max pooling layer}
The central idea in the exact analysis of the max-pooling layer is finding a set of \emph{local max point candidates} when we apply the max pooling operation on the image. We consider the max pooling operation on the ImageStar in Example $\ref{ex1}$ with a pool size of $2 \times 2$, a padding size of $P = [0~0~0~0]$, and a stride $S = [2~2]$ to clarify the exact analysis step-by-step. First, the max-pooling operation is applied on $4$ local regions $I, II, III, IV$, as shown in Figure \ref{fig:reach-exact-max-pooling}. The local regions $II, III, IV$ have only one \emph{max point candidate} whic is the pixel that has the maximum value in the region. It is interesting to note that region $I$ has two max point candidates at the positions $(1, 2, 1)$ and $(2, 2, 1)$ and these candidates correspond to different conditions of the predicate variable $\alpha$. For example, the pixel at the position $(1, 2, 1)$ is the max point if and only if $4 + \alpha \times 1 \geq 3 + \alpha \times 0 $. Note that with $-2 \leq \alpha \leq 2$, we always have $4 + \alpha * 1 \geq 2 + \alpha \times 0 \geq 0 + \alpha \times 0$. Since the local region $I$ has two max point candidates, and other regions have only one, the exact reachable set of the max-pooling layer is the union of two new ImageStars $\Theta_1$ and $\Theta_2$. In the first reachable set $\Theta_1$, the max point of the region $I$ is $(1, 2, 1)$ with an additional constraint on the predicate variable $\alpha \geq -1$. For the second reachable set $\Theta_2$, the max point of the region $I$ is $(2, 2, 1)$ with an additional constraint on the predicate variable $\alpha \leq -1$. One can see that from a single ImageStar input set, the output reachable set of the max-pooling layer is split into two new ImageStars. Therefore, the number of ImageStars in the reachable set of the max-pooling layer may grow quickly if each local region has more than one max point candidates. The worst-case complexity of the number of ImageStars in the exact reachable set of the max-pooling layer is given in Lemma \ref{lm:complexity-max-pooling}. The exact reachability algorithm is presented in the Appendix \ref{sec:exact-maxpool}.
\begin{figure}[H]
  \centering
      \includegraphics[width=0.6\textwidth]{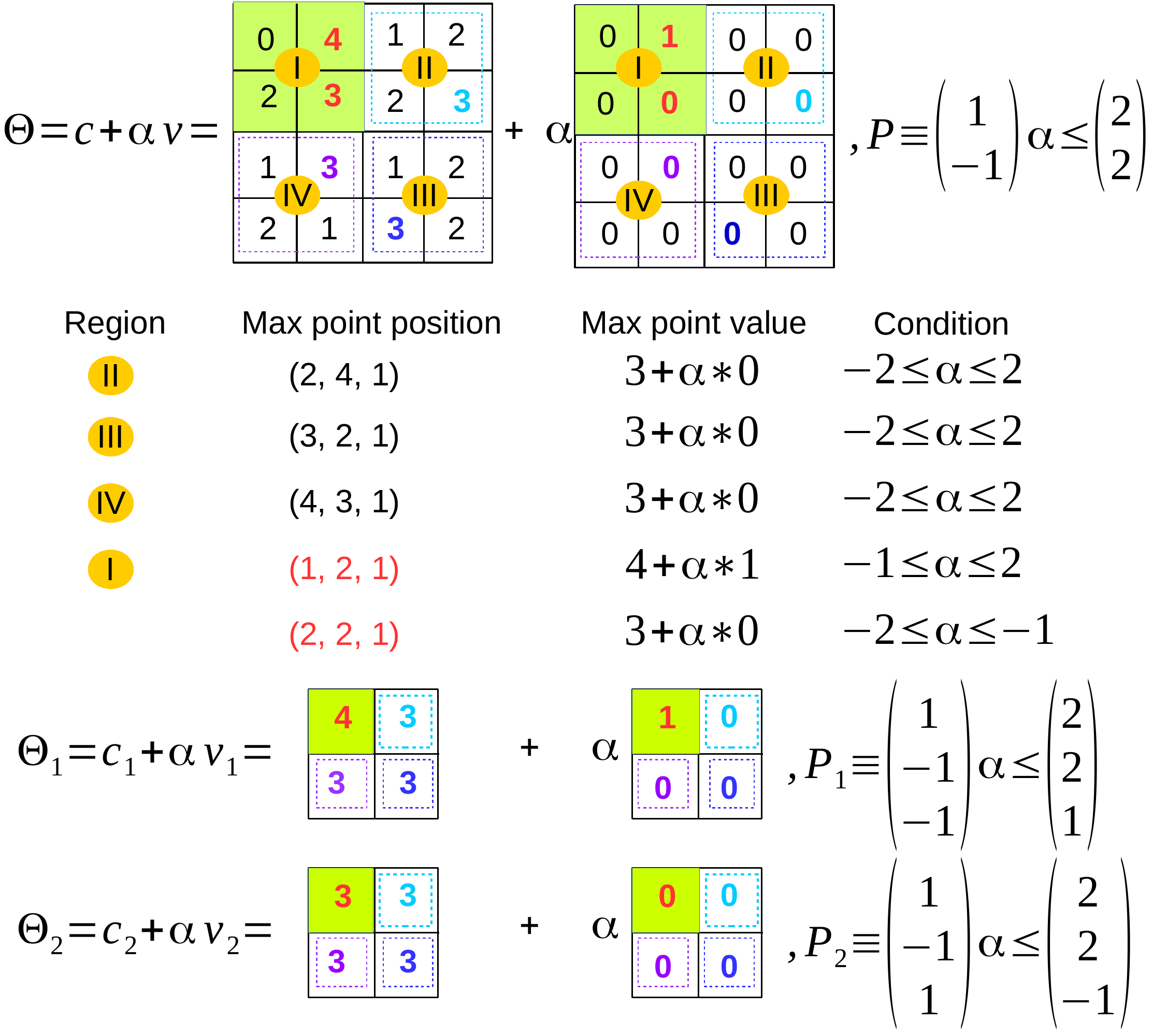}
  \caption{Exact reachability of max pooling layer using ImageStars.}
  \label{fig:reach-exact-max-pooling}
  \vspace{-1.5em}
\end{figure}
\begin{lemma}\label{lm:complexity-max-pooling}
The worst-case complexity of the number of ImageStars in the exact reachability of the max pooling layer is $\mathcal{O}(((p_1\times p_2)^{h \times w})^{nc})$ where $[h, w, nc]$ is the size of the ImageStar output sets, and $[p_1, p_2]$ is the size of the max-pooling layer.
\end{lemma}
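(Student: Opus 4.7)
The plan is to establish the bound by a multiplicative counting argument that tracks how many ImageStars can be produced as we process the output pixels of the max pooling layer one at a time. First I would observe that the output tensor has exactly $h \times w \times nc$ pixels, and since max pooling acts per channel over disjoint (or strided) windows, the value of each output pixel depends on its own local receptive field of size $p_1 \times p_2$ in a single input channel. Hence the exact reachability problem decomposes into $h \times w \times nc$ subproblems, each of which is a single ``max over $p_1 \times p_2$ candidates'' operation on the current ImageStar(s).

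Next I would analyze the per-pixel splitting factor. For a single output pixel, an ImageStar input must be split into one new ImageStar for each \emph{local max point candidate}: given candidate at position $k$, the new ImageStar is the original one conjoined with the linear predicate constraints $c_k + \sum_i \alpha_i v_{i,k} \geq c_\ell + \sum_i \alpha_i v_{i,\ell}$ for every other position $\ell$ in the receptive field. Because there are exactly $p_1 \times p_2$ pixels in the receptive field, the worst case is that every one of them can be the maximum for some feasible choice of predicate variables, giving at most $p_1 \times p_2$ new ImageStars per input ImageStar per output pixel.

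Then I would compose these splits across output pixels. Processing the output pixels in any fixed order, if after handling $t$ pixels we have at most $N_t$ ImageStars, then after handling the $(t{+}1)$-st pixel we have at most $N_{t+1} \le (p_1 \times p_2)\, N_t$, since the splitting argument above applies uniformly to each current ImageStar. Starting from $N_0 = 1$ and iterating over all $h \times w \times nc$ output pixels yields
\begin{equation*}
N_{h \cdot w \cdot nc} \;\le\; (p_1 \times p_2)^{h \cdot w \cdot nc} \;=\; \bigl((p_1 \times p_2)^{h \times w}\bigr)^{nc},
\end{equation*}
which is exactly the claimed $\mathcal{O}\bigl(((p_1\times p_2)^{h \times w})^{nc}\bigr)$ bound.

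The only subtle point, and the likely main obstacle to a fully rigorous write-up, is justifying that the per-pixel splitting factor $p_1 \times p_2$ is truly a worst-case upper bound rather than something smaller after feasibility-checking: one must argue that, although some candidate branches may turn out to have infeasible predicate polytopes and hence be pruned, in the adversarial worst case none of them are pruned. This is handled by noting that the bound is obtained \emph{before} feasibility pruning (pruning can only reduce the count), so the multiplicative estimate remains a valid upper bound regardless of whether all branches are actually realized, which suffices for the worst-case complexity statement.
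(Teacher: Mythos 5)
Your argument is correct and follows essentially the same route as the paper: the output has $h \times w \times nc$ pixels, each contributing at most $p_1 \times p_2$ max-point candidates, and multiplying these per-pixel splitting factors gives $((p_1 \times p_2)^{h \times w})^{nc}$. Your additional remarks on the multiplicative induction and on feasibility pruning only reducing the count are sound elaborations of the paper's more terse two-line argument.
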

\begin{proof}
An image in the ImageStar output set has $h \times w$ pixels in each channel. For each pixel, in the worst case, there are $p_1 \times p_2$ candidates. Therefore, the number of ImageStars in the output set in the worst case is $\mathcal{O}(((p_1\times p_2)^{h \times w})^{nc})$.
\end{proof}

Finding a set of local max point candidates is the core computation in the exact reachability of max-pooling layer. To optimize this computation, we divide the search for the local max point candidates into two steps. The first one is to estimate the ranges of all pixels in the ImageStar input set. We can solve $h_I \times w_I \times nc$ linear programming optimizations to find the exact ranges of these pixels, where $[h_I, w_I, nc]$ is the size of the input set. However, unfortunately this is a time-consuming computation. For example, \textbf{\textit{if a single linear optimization can be done in $0.01$ seconds, for an ImageStar of the size $224 \times 224 \times 32$, we need about $10$ hours to find the ranges of all pixels}}. To overcome this bottleneck, we quickly estimate the ranges using only the ranges of the predicate variables to get rid of a vast amount of non-max-point candidates. In the second step, we solve a much smaller number of LP optimizations to determine the exact set of the local max point candidates and then construct the ImageStar output set based on these candidates.

Lemma \ref{lm:complexity-max-pooling} shows that the number of ImageStars in the exact reachability analysis of a max-pooling layer may grow exponentially. To overcome this problem, we propose the following over-approximate reachability method.

\subsubsection{Over-approximate reachability of a max pooling layer}
The central idea of the over-approximate analysis of the max-pooling layer is that if a local region has more than one max point candidates, we introduce a \emph{new predicate variable} standing for the max point of that region. We revisit the example introduced earlier in the exact analysis to clarify this idea. Since the first local region $I$ has two max point candidates, we introduce new predicate variable $\beta$ to represent the max point of this region by adding three new constraints: 1) $\beta \geq 4 + \alpha * 1$, i.e., $\beta$ must be equal or larger than the value of the first candidate ; 2) $\beta \geq 3 + \alpha * 0$, i.e., $\beta$ must be equal or larger than the value of the second candidate; 3) $\beta \leq 6$, i.e., $\beta$ must be equal or smaller than the upper bound of the pixels values in the region. With the new predicate variable, a single over-approximate reachable set $\Theta^{\prime}$ can be constructed in Figure \ref{fig:reach-approx-max-pooling}. The approximate reachability algorithm is presented in the Appendix \ref{sec:approx-maxpool}.
\begin{figure}[]
  \centering
      \includegraphics[width=0.6\textwidth]{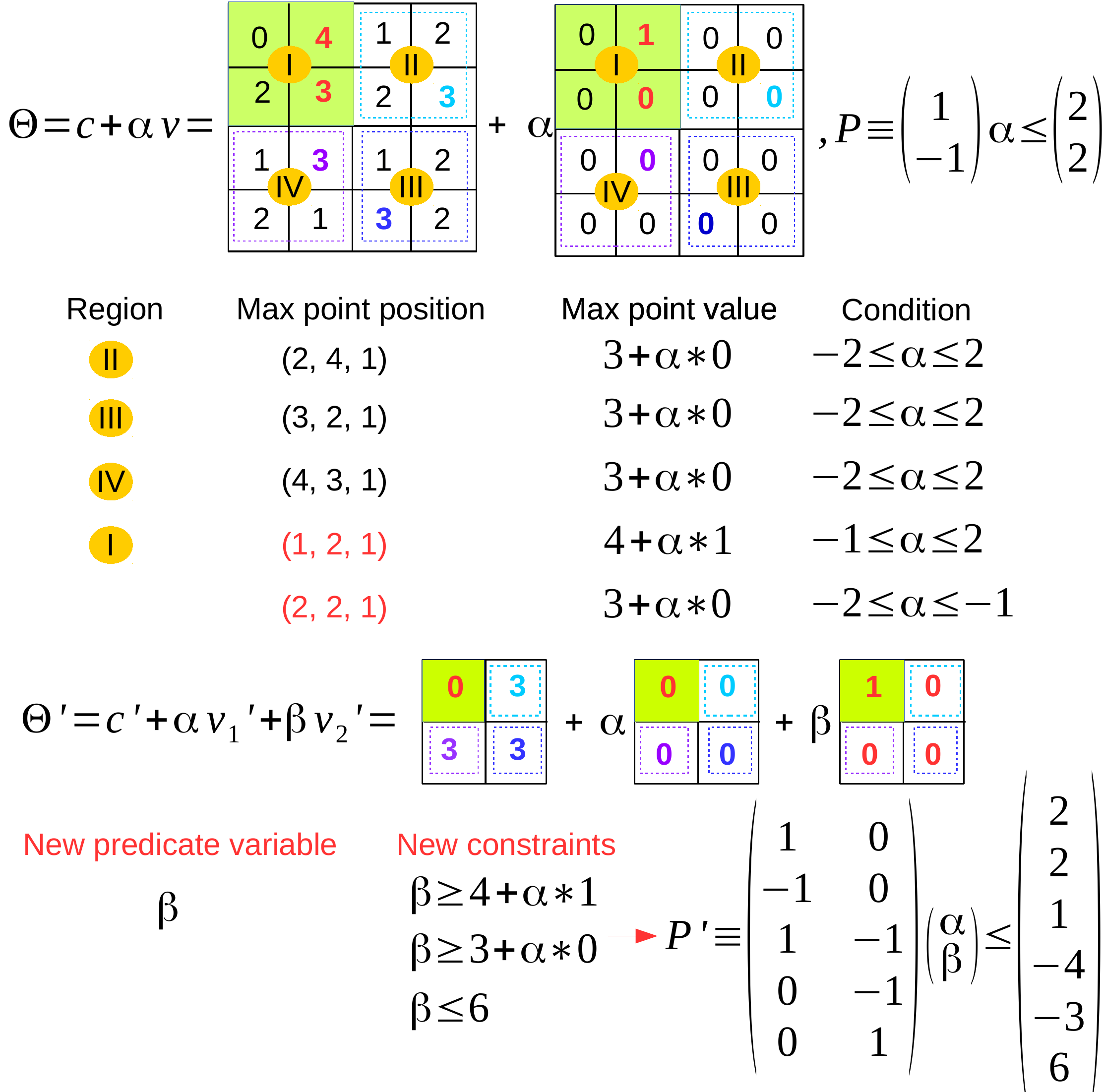}
  \caption{Over-approximate reachability of max pooling layer using ImageStar.}
  \label{fig:reach-approx-max-pooling}
\vspace{-2em}
\end{figure}
\begin{lemma}\label{lm:complexity-approx-maxpooling-layer}
The worst-case complexity of the new predicate variables introduced in the over-approximate analysis is $\mathcal{O}(h\times w \times nc)$ where $[h, w, nc]$ is the size of the ImageStar output set.
\end{lemma}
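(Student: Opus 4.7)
The plan is to count new predicate variables by observing a one-to-one correspondence between them and output pixels of the max-pooling layer. Specifically, the over-approximate construction (as described just before the lemma, using the example) introduces at most one fresh predicate variable $\beta$ per local pooling region, and only when that region has strictly more than one max-point candidate; if the region has a unique max-point candidate, no new variable is needed and the pixel is expressed directly as a linear combination of the existing predicate variables.

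First, I would fix the correspondence between pooling regions and output pixels. Given the input ImageStar of some size $[h_I, w_I, nc]$, the output ImageStar has size $[h, w, nc]$, and the $(i, j, k)$-th output pixel is determined by exactly one local pooling region in channel $k$ (determined by the pool size, stride, and padding). Hence the total number of local regions across the output is exactly $h \times w \times nc$.

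Next, I would argue that each region contributes at most one new predicate variable. The over-approximate algorithm described in Section 4.5.2 introduces $\beta$ only for regions with multiple candidate max points, and this $\beta$ is constrained by the $\leq p_1 \cdot p_2$ inequalities $\beta \geq \text{(candidate value)}$ together with the upper-bound constraint $\beta \leq u$. In all cases, at most one fresh variable is introduced per output pixel. Summing over all output pixels yields at most $h \times w \times nc$ new predicate variables, which gives the stated $\mathcal{O}(h \times w \times nc)$ worst-case bound and saturates when every region has more than one candidate.

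I do not expect any genuine obstacle here: the result is essentially a counting argument built directly into the construction. The only thing to be careful about is to state explicitly that the construction introduces \emph{at most one} fresh variable per region (never more), which is immediate from the algorithm's design of using a single $\beta$ to over-approximate the max over all candidates in that region, rather than, e.g., one variable per candidate.
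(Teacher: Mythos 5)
Your proof is correct and follows essentially the same argument as the paper's (which appears in the text as a short note that there are $h \times w \times nc$ local regions and at most one new predicate variable per region in the worst case); you simply spell out the region-to-output-pixel correspondence and the at-most-one-variable-per-region observation in more detail.
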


\subsection{Reachability of a ReLU layer}
Similar to max-pooling layer, the reachability analysis of a ReLU layer is also challenging because the value of each pixel in an ImageStar may be smaller than zero or larger than zero depending on the values of the predicate variables ($ReLU(x) = max(0, x)$). In this section, we investigate the exact and over-approximate reachability algorithms for a ReLU layer with an ImageStar input set. The techniques we use in this section are adapted from in \cite{tran2019fm}.

\subsubsection{Exact reachability of a ReLU layer}
The central idea of the exact analysis of a ReLU layer with an ImageStar input set is performing a sequence of \emph{stepReLU operations} over all pixels of the ImageStar input set. Mathematically, the exact reachable set of a ReLU layer $L$ can be computed as follows.
\begin{equation*}
\mathcal{R}_L = stepReLU_N(stepReLU_{N-1}(\dots (stepReLU_1(\mathcal{I})))),
\end{equation*}
where $N$ is the total number of pixels in the ImageStar input set $\mathcal{I}$.
The $stepReLU_i$ operation determines whether or not a split occurs at the $i^{th}$ pixel. If the pixel value is larger than zero, then the output value of that pixel remains the same. If the pixel value is smaller than zero than the output value of that pixel is reset to be zero. The challenge is that the pixel value depends on the predicate variables. Therefore, there is the case that the pixel value may be negative or positive with \emph{an extra condition} on the predicate variables. In this case, we split the input set into two \emph{intermediate} ImageStar reachable sets and apply the ReLU law on each intermediate reach set. An example of the stepReLU operation on an ImageStar is illustrated in Figure \ref{fig:reach-exact-relu}. The value of the first pixel value $-1 + \alpha$ would be larger than zero if $\alpha \leq 1$, and in this case we have $ReLU(-1 + \alpha) = -1 + \alpha$. If $\alpha <= 1$, then $ReLU(-1 + \alpha) = 0 + \alpha \times 0$. Therefore, the first stepReLU operation produces two intermediate reachable sets $\Theta_1$ and $\Theta_2$, as shown in the figure. The number of ImageStars in the exact reachable set of a ReLU layer increases quickly along with the number of splits in the analysis, as stated in the following lemma.
\begin{figure}[]
\vspace{-1em}
  \centering
      \includegraphics[width=0.5\textwidth]{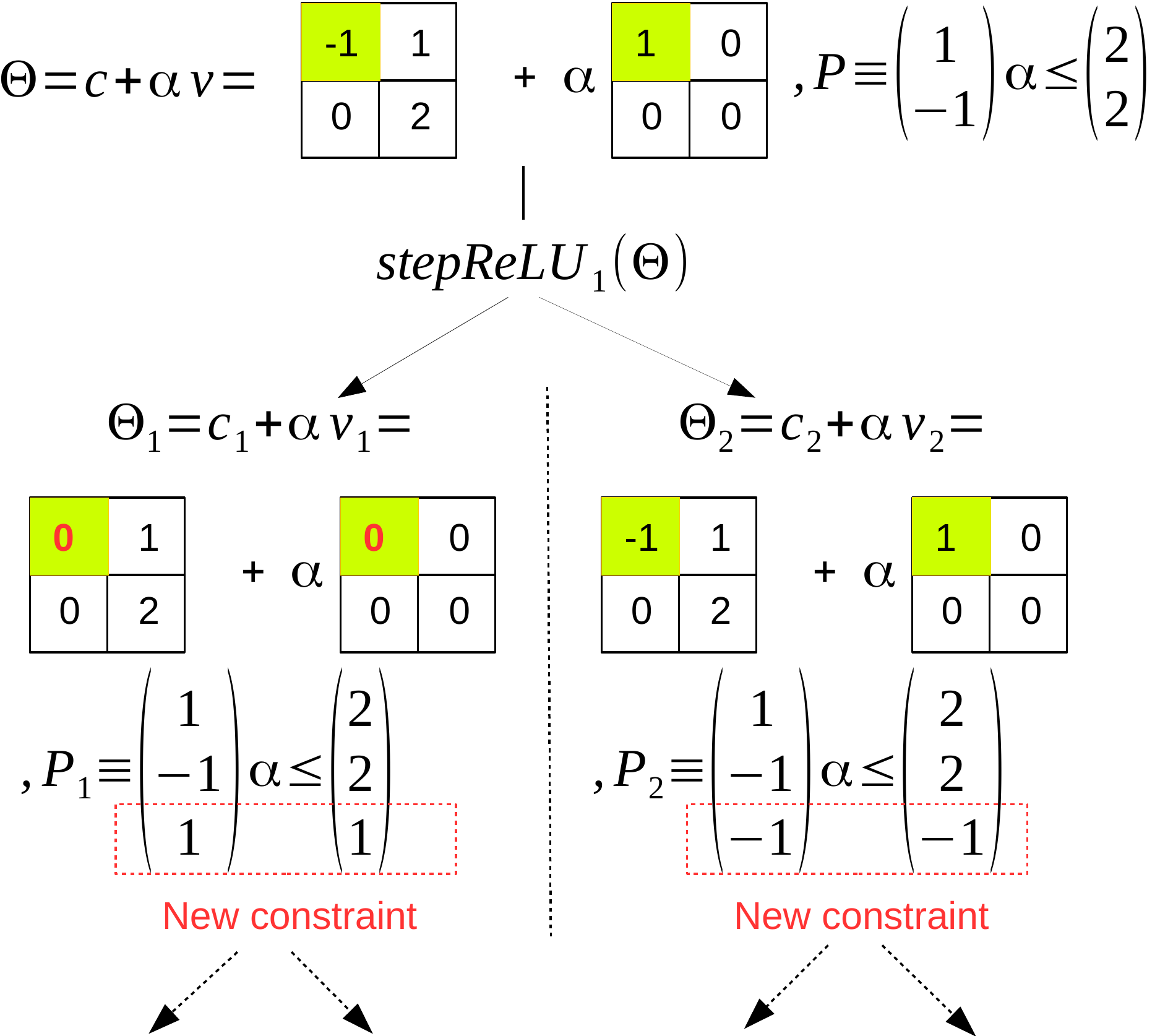}
  \caption{stepReLU operation on an ImageStar.}
  \label{fig:reach-exact-relu}
  \vspace{-2em}
\end{figure}

\begin{lemma}
The worst-case complexity of the number of ImageStars in the exact analysis of a ReLU layer is $\mathcal{O}(2^N)$, where $N$ is the number of pixels in the ImageStar input set.
\end{lemma}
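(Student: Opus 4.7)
The plan is to establish the bound by a straightforward induction on the number of stepReLU operations, which is exactly $N$ (one per pixel of the ImageStar input set). The exact reachable set of a ReLU layer is defined as the composition $\mathcal{R}_L = stepReLU_N \circ stepReLU_{N-1} \circ \cdots \circ stepReLU_1(\mathcal{I})$, so the total count of output ImageStars equals the cumulative branching produced over these $N$ stages.

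First, I would formalize the branching behavior of a single $stepReLU_i$ operation acting on a collection of ImageStars. For each individual ImageStar $\Theta$ in the current working collection, there are three cases for the $i^{th}$ pixel: (i) the pixel value is provably nonnegative over the predicate $P$, in which case $\Theta$ is passed through unchanged; (ii) the pixel value is provably nonpositive over $P$, in which case $\Theta$ is replaced by a single ImageStar with that pixel reset to zero; (iii) the sign is ambiguous, in which case $\Theta$ is split into two intermediate ImageStars by conjoining the extra linear constraint (pixel $\geq 0$ in one, pixel $\leq 0$ in the other) with its predicate. Thus a single $stepReLU_i$ at most doubles the number of ImageStars in the working set.

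Next, letting $M_i$ denote the number of ImageStars after applying $stepReLU_1, \ldots, stepReLU_i$, I would argue by induction that $M_i \leq 2^i$. The base case $M_0 = 1 \leq 2^0$ is immediate since the input is a single ImageStar. For the inductive step, $M_{i+1} \leq 2 M_i \leq 2 \cdot 2^i = 2^{i+1}$ by the splitting analysis above. Setting $i = N$ yields $|\mathcal{R}_L| \leq 2^N$, giving the claimed $\mathcal{O}(2^N)$ bound. I would close by noting that this bound is tight in the worst case: one can exhibit an ImageStar predicate in which every pixel has ambiguous sign independently of the others, forcing a split at every stage.

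There is no genuine obstacle here — the argument is a clean doubling recursion — but the one subtle point worth stating explicitly is that the splits performed at different pixels are genuinely independent branches (the added constraints at step $i$ do not retroactively collapse branches created at earlier steps), so the $2 \cdot M_i$ upper bound at each stage is not improved by any global argument in the worst case. This observation is what makes the exponential bound unavoidable and motivates the over-approximate ReLU analysis discussed subsequently.
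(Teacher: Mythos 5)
Your proof is correct and is the natural doubling argument: each $stepReLU_i$ acts on one pixel, and for each ImageStar in the working collection it either passes it through, zeroes the pixel, or splits it into two ImageStars carrying the extra constraints $x_i \geq 0$ and $x_i \leq 0$ respectively; hence $M_{i+1} \leq 2M_i$ and $M_N \leq 2^N$. This is exactly the reasoning the surrounding text of the paper sets up (the stepReLU description and the example with $\Theta_1$, $\Theta_2$). One thing worth flagging: the proof printed in the paper directly beneath this lemma does not actually prove it --- it argues about the $h \times w \times nc$ local regions and the number of new predicate variables in the \emph{approximate max-pooling} analysis, i.e., it is a misplaced copy of the argument for Lemma~\ref{lm:complexity-approx-maxpooling-layer}. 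So your induction on the number of stepReLU operations is not merely an alternative route; it supplies the argument the paper evidently intended but omitted. Your closing remark about tightness (an input where every pixel's sign is ambiguous independently) goes slightly beyond what an $\mathcal{O}(\cdot)$ upper-bound claim requires, but it is a sensible observation and consistent with the paper's motivation for introducing the over-approximate ReLU analysis.
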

\begin{proof}
There are $h \times w \times nc$ local regions in the approximate analysis. In the worst case, we need to introduce a new variable for each region. Therefore, the worst case complexity of new predicate variables introduced is $\mathcal{O}(h \times w \times nc)$.
\end{proof}

Similar to \cite{tran2019fm}, to control the explosion in the number of ImageStars in the exact reachable set of a ReLU layer, we propose an over-approximate reachability algorithm in the following.

\subsubsection{Over-approximate reachability of a ReLU layer}
The idea behind the over-approximate reachability of ReLU layer is replacing the stepReLU operation at each pixel in the ImageStar input set by an \emph{approxStepReLU} operation. At each pixel where a split occurs, we introduce a new predicate variable to over-approximate the result of the stepReLU operation at that pixel. An example of the overStepReLU operation on an ImageStar is depicted in Figure \ref{fig:reach-approx-relu} in which the first pixel of the input set has the ranges of $[l_1 = -3, u_1 = 1]$ indicating that a split occurs at this pixel. To avoid this split, we introduce a new predicate variable $\beta$ to over-approximate the exact intermediate reachable set (i.e., two blue segments in the figure) by a triangle. This triangle is determined by three constraints: 1) $\beta \geq 0$ (the $ReLU(x) \geq 0$ for any $x$); 2) $\beta \geq -1 + \alpha$ ($ReLU(x) \geq x$ for any $x$); 3) $\beta \leq 0.5 + 0.25 \alpha$ (upper bound of the new predicate variable). Using this over-approximation, a single intermediate reachable set $\Theta^{\prime}$ is produced as shown in the figure. After performing a sequence of approxStepReLU operations, we obtain a single over-approximate ImageStar reachable set for the ReLU layer. However, the number of predicate variables and the number of constraints in the obtained reachable set increase.
\begin{figure}[]
  \centering
      \includegraphics[width=0.6\textwidth]{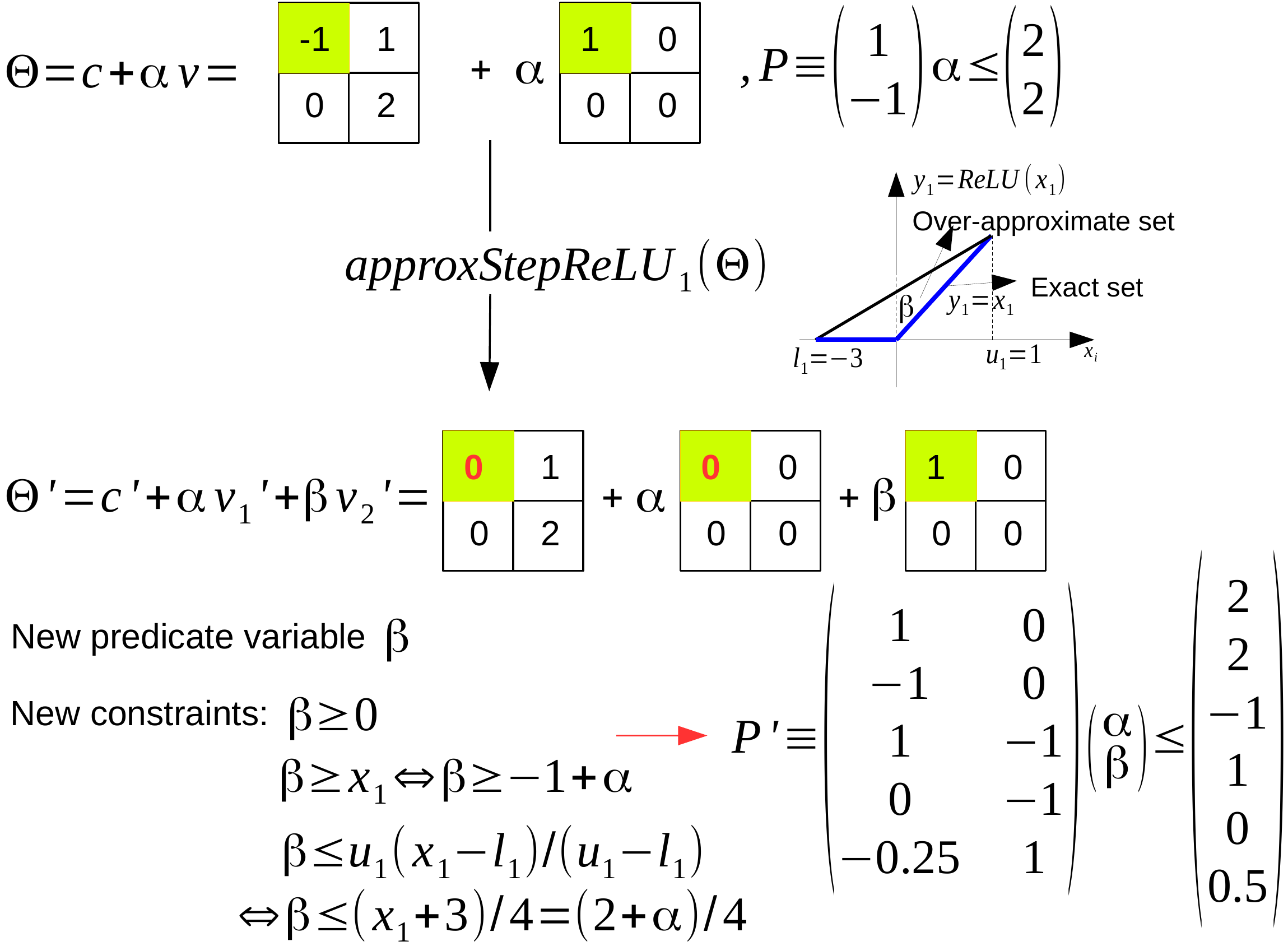}
  \caption{approxStepReLU operation on an ImageStar.}
  \label{fig:reach-approx-relu}
  \vspace{-1.5em}
\end{figure}

\begin{lemma}\label{lm:complexity-relu-predicate}
The worst case complexity of the increment of predicate variables and constraints is $\mathcal{O}(N)$ and $\mathcal{O}(3\times N)$ respectively, where $N$ is the number of pixels in the ImageStar input set.
\end{lemma}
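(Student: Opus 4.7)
The plan is to argue by direct counting over the sequence of \emph{approxStepReLU} operations. By construction, the over-approximate reachability of the ReLU layer applies exactly one approxStepReLU per pixel of the ImageStar input set, so the total number of such operations is $N$. Thus I only need to bound the per-operation contribution to the number of new predicate variables and new linear constraints, and sum over the $N$ steps.

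First I would carry out a case analysis on a single approxStepReLU at pixel $i$, using the estimated lower and upper bounds $l_i, u_i$ of that pixel (computable from the current predicate $P$). If $l_i \geq 0$, the output equals the input and neither predicate variables nor constraints are added. If $u_i \leq 0$, the output pixel is reset to the zero linear form, which also introduces no new predicate variable and no new constraint. The only nontrivial case is the split case $l_i < 0 < u_i$, and in this case, following the construction illustrated for the triangle over-approximation, exactly one fresh predicate variable $\beta_i$ is introduced, together with exactly three new linear constraints, namely $\beta_i \geq 0$, $\beta_i \geq c_i + \sum_j \alpha_j v_{i,j}$ (the lower-bound line), and $\beta_i \leq \lambda(c_i + \sum_j \alpha_j v_{i,j}) + \mu$ (the upper-bounding chord between the points $(l_i,0)$ and $(u_i,u_i)$).

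Summing over all $N$ pixels gives the worst-case bounds: at most $N$ new predicate variables and at most $3N$ new constraints, i.e., $\mathcal{O}(N)$ and $\mathcal{O}(3N)$ respectively, which matches the claim. The worst case is realized precisely when every pixel of the input ImageStar triggers the split case.

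There is essentially no hard step here: the statement is a syntactic bookkeeping fact about the construction in Figure~\ref{fig:reach-approx-relu}. The only mildly subtle point is to make explicit that the approxStepReLU operation is applied exactly once per pixel (rather than being iterated), so that the sum telescopes to $N$ and does not compound; and to observe that the bounds $l_i, u_i$ used to decide whether a split occurs do not themselves require the introduction of new predicate variables — they are derived from the existing predicate $P$ via bound estimation, as in the max-pooling analysis.
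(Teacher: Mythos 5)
Your proof is correct and follows essentially the same argument as the paper: in the worst case a split occurs at every one of the $N$ pixels, each split introduces exactly one fresh predicate variable and exactly three new linear constraints (the $\beta\geq 0$, $\beta \geq$ input, and upper-chord constraints of the triangle over-approximation), giving $\mathcal{O}(N)$ and $\mathcal{O}(3N)$. Your version merely makes explicit the non-splitting cases and the fact that bound estimation adds no variables, which the paper leaves implicit.
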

\begin{proof}
In the worst case, splits occur at all $N$ pixels in the ImageStar input set. In this case, we need to introduce $N$ new predicate variables to over-approximate the exact intermediate reachable set. For each new predicate variable, we add $3$ new constraints.
\end{proof}

One can see that determining where splits occur is crucial in the exact and over-approximate analysis of a ReLU layer. To do this, we need to know the ranges of all pixels in the ImageStar input set. However, as mentioned earlier, the computation of the exact range is expensive. To reduce the computation cost, we first use the estimated ranges of all pixels to get rid of a vast amount of non-splitting pixels. Then we compute the exact ranges for the pixels where splits may occur to compute the exact or over-approximate reachable set of the layer.

\subsection{Reachabilty algorithm and parallelization}
We have presented the core ideas for reachability analysis of different types of layers in a CNN. The reachable set of a CNN is constructed layer-by-layer in which the output reachable set of the previous layer is the input for the next layer. For the convolutional layer, average pooling layer and fully connected layer, we always can compute efficiently the exact reachable set of each layer. For the max pooling layer and ReLU layer, we can compute both the exact and the over-approximate reachable sets. However, the number of ImageStars in the exact reachable set may grow quickly. Therefore, \textbf{\textit{in the exact analysis,  a layer may receive multiple input sets which can be handled in parallel to speed up the computation time}}. The reachability algorithm for a CNN is summarized in Algorithm \ref{alg:reach-CNN}. The detail implementation of the reachability algorithm for each layer can be found in NNV \cite{tran2020cav_tool}.
\begin{algorithm}[H]
	\caption{Reachability analysis for a CNN.}\label{alg:reach-CNN}
	\begin{algorithmic}[1]
        \footnotesize
		\Input{$\mathcal{N} = \{L_i\}_1^n$, $\mathcal{I}$, $scheme$ (\texttt{'exact' or 'approx'})}

		\Output{$R_{\mathcal{N}}$}

        \Procedure{\texttt{ $R_{\mathcal{N}}$ = reach}}{$\mathcal{N}, \mathcal{I}, scheme$}
        \State $In = \mathcal{I}$
		\ParFor {\texttt{$i=1:n$}} $In = L_i.reach(In, scheme)$
                \EndParFor
	\State $R_{\mathcal{N}} = In$

	\EndProcedure
	\end{algorithmic}
\end{algorithm}

\section{Evaluation}
The proposed reachability algorithms are implemented in NNV~\cite{tran2020cav_tool}, a tool for verification of deep neural networks and learning-enabled autonomous CPS.
NNV utilizes core functions in MatConvNet \cite{vedaldi2015matconvnet} for the analysis of the convolutional and average pooling layers.
%
%
The evaluation of our approach consists of two parts.
First, we evaluate our approach in comparison with the zonotope \cite{singh2018fast} and polytope methods \cite{singh2019abstract} re-implemented in NNV via robustness verification of deep neural networks.
Second, we evaluate the scalability of our approach and the DeepPoly polytope method using real-world image classifiers, VGG16, and VGG19 \cite{simonyan2014very}.
The experiments are done on a computer with following configurations: Intel Core i7-6700 CPU @ 3.4GHz $\times$ 8 Processor, 62.8 GiB Memory, Ubuntu 18.04.1 LTS OS.\footnote{Codes are available online at \url{https://github.com/verivital/nnv/tree/master/code/nnv/examples/Submission/CAV2020_ImageStar}.}
Finally, we present the comparison with ERAN-DeepZ method on their $ConvMaxPool$ network trained on CIFAR-10 data set in the Appendix of this paper. 

\subsection{Robustness Verification of MNIST Classification Networks}
We compare our approach with the zonotope and polytope methods in two aspects including verification time and conservativeness of the results.
To do that, we train 3 CNNs a small, a medium, and a large CNN with $98\%, 99.7\%$ and $99.9\%$ accuracy respectively using the MNIST data set consisting of $60000$ images of handwritten digits with a resolution of $28 \times 28$ pixels \cite{lecun1998mnist}.
The network architectures are given in Figure \ref{fig:MNIST_networks} in the Appendix.
The networks classify images into ten classes: $0, 1, \dots, 9$.
The classified output is the index of the dimension that has maximum value, i.e., the argmax across the $10$ outputs.
We evaluate the robustness of the network under the well-known brightening attack used in \cite{gehr2018ai2}.
The idea of a brightening attack is that we can change the value of some pixels independently in the image to make it brighter or darker to fool the network, to misclassify the image.
In this case study, we darken a pixel of an image if its value $x_i$ (between 0 and 255) is larger than a threshold $d$, i.e., $x_i \geq d$.
Mathematically, we reduce the value of that pixel $x_i$ to the new value $x_i^{\prime}$ such that $0 \leq x_i^{\prime} \leq \delta \times x_i$.

The robustness verification is done as follows.
We select $100$ images that are correctly classified by the networks and perform the brightening attack on these, which are then used to evaluate the robustness of the networks.
A network is robust to an input set if, for any \emph{attacked} image, this is correctly classified by the network.
We note that the input set contains an infinite number of images.
Therefore, to prove the robustness of the network to the input set, we first compute the output set containing all possible output vectors of the network using reachability analysis.
Then, we prove that in the output set, the correctly classified output always has the maximum value compared with other outputs.
Note that we can neglect the \emph{softmax} and \emph{classoutput} layers of the networks in the analysis since we only need to know the maximum output in the output set of the last fully connected layer in the networks to prove the robustness of the network.

We are interested in the percentage of the number of input sets that a network is provably robust and the verification times of different approaches under different values of $d$ and $\theta$.
When $d$ is small, the number of pixels in the image that are attacked is large and vice versa.
For example, the average number of pixels attacked (computed on $100$ cases) corresponding to $d = 250$, $245$ and $240$ are $15$, $21$ and $25$ respectively.
The value of $\delta$ dictates the size of the input set that can be created by a specific attack. Stated differently it dictates the range in which the value of a pixel can be changed.
For example, if $d = 250$ and $\delta = 0.01$, the value of an attacked pixel many range from $0$ to $2.55$.

The experiments show that using the zonotope method, we cannot prove the robustness of any network.
The reason is that the zonotope method obtains very conservative reachable sets.
Figure \ref{fig:reachSet-Compare} illustrates the ranges of the outputs computed by our ImageStar (approximate scheme), the zonotope and polytope approaches when we attack a digit $0$ image with brightening attack in which $d = 250$ and $\delta = 0.05$.
One can see that, using ImageStar and polytope method, we can prove that the output corresponding to the digit $0$ is the one that has a maximum value, which means that the network is robust in this case.
However, the zonotope method produces very large output ranges that cannot be used to prove the robustness of the network.
The figure also shows that our ImageStar method produces tighter ranges than the polytope method, which means our result is less conservative than the one obtained by the polytope method.
We note that the zonotope method is very time-consuming.
It needs $93$ seconds to compute the reachable set of the network in this case, while the polytope method only needs $0.3$ seconds, and our approximate ImageStar method needs $0.74$ seconds.
The main reason is that the zonotope method introduces many new variables when constructing the reachable set of the network, which results in the increase in both computation time and conservativeness.

\begin{figure}[h!]%
\vspace{-1em}%
  \centering%
      \includegraphics[width=\textwidth]{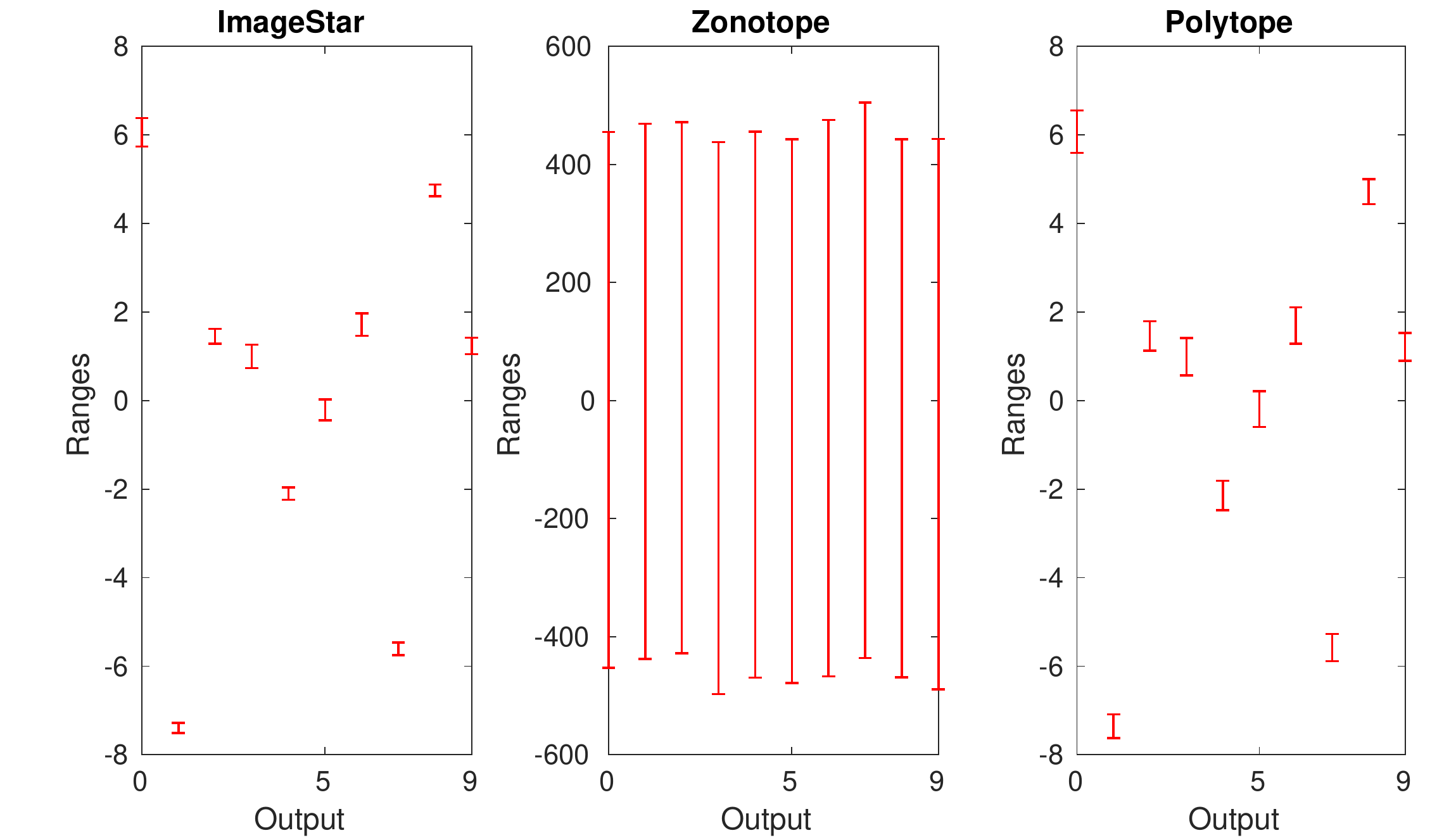}%
			\vspace{-10pt}%
  \caption{An example of output ranges of the small MNIST classification networks using different approaches.}
  \label{fig:reachSet-Compare}%
  \vspace{-2em}%
\end{figure}

The comparison of the polytope and our ImageStar method is given in Tables \ref{tab:small_convnet}, \ref{tab:medium_convnet}, and \ref{tab:large_convnet}.
The tables show that in all networks, our method is less conservative than the polytope approach since the number of cases that our approach can prove the robustness of the network is larger than the one proved by the polytope method.
For example, for the small network, for $d = 240$ and $\delta = 0.015$, we can prove $71$ cases while the polytope method can prove $65$ cases.
Importantly, the number of cases proved by DeepPoly reduces quickly when the network becomes larger.
For example, for the case that $d = 240$ and $\delta = 0.015$, the polytope method is  able to prove the robustness of the medium network for $38$ cases while our approach can prove $88$ cases.
This is because the polytope method becomes more and more conservative when the network or the input set is large.
The tables show that the polytope method is faster than our ImageStar method on the small network.
However, it is slower than the ImageStar method on any larger networks in all cases.
Notably, for the large network, the ImageStar approach is significantly faster than the polytope approach, $16.65$ times faster in average.
The results also show that the polytope approach may run into memory problem for some large input sets.

\vspace{-1em}%
\begin{table}[h]
   \centering
   \resizebox{0.67\linewidth}{!}{
\setlength{\arrayrulewidth}{.1em}
\begin{tabular}{lcccccc}
        \cline{2-7}
        & \multicolumn{6}{c}{\textbf{Robustness Results (in Percent)}}                                          \\ \cline{2-7}
        & \multicolumn{2}{c}{$\delta = 0.005$} & \multicolumn{2}{c}{$\delta = 0.01$} & \multicolumn{2}{c}{$\delta = 0.015$} \\  \cline{2-7}
        & \textit{Polytope} & \textit{ImageStar~~}       & \textit{~~Polytope}       & \textit{ImageStar~~}       & \textit{~~Polytope}        & \textit{ImageStar}       \\ \cline{1-7}
$d = 250$& $86.00$ & $87.00$& $84.00$ & $87.00$& $83.00$ & $87.00$\\
$d = 245$& $77.00$ & $78.00$& $72.00$ & $78.00$& $70.00$ & $77.00$\\
$d = 240$& $72.00$ & $73.00$& $67.00$ & $72.00$& $65.00$ & $71.00$\\ \cline{1-7}
        & \multicolumn{6}{c}{\textbf{Verification Times (in Seconds)}}                                             \\ \cline{1-7}

$d = 250$& $11.24$ & $16.28$& $18.26$ & $28.19$& $26.42$ & $53.43$\\
$d = 245$& $14.84$ & $19.44$& $24.96$ & $40.76$& $38.94$ & $85.97$\\
$d = 240$& $18.29$ & $25.77$& $33.59$ & $64.10$& $54.23$ & $118.58$\\ \hline
\end{tabular}}
\caption{Verification results of the small MNIST CNN.}
\label{tab:small_convnet}
\vspace{-3em}
\end{table}
\vspace{-1em}
\begin{table}[h]
   \centering
   \resizebox{0.67\linewidth}{!}{
\setlength{\arrayrulewidth}{.1em}
\begin{tabular}{lcccccc}
        \cline{2-7}
        & \multicolumn{6}{c}{\textbf{Robustness Results (in Percent)}}                                          \\ \cline{2-7}
        & \multicolumn{2}{c}{$\delta = 0.005$} & \multicolumn{2}{c}{$\delta = 0.01$} & \multicolumn{2}{c}{$\delta = 0.015$} \\  \cline{2-7}
        & \textit{Polytope} & \textit{ImageStar~~}       & \textit{~~Polytope}       & \textit{ImageStar~~}       & \textit{~~Polytope}        & \textit{ImageStar}       \\ \cline{1-7}
$d = 250$& $86.00$ & $99.00$& $73.00$ & $99.00$& $65.00$ & $99.00$\\
$d = 245$& $74.00$ & $95.00$& $58.00$ & $95.00$& $46.00$ & $95.00$\\
$d = 240$& $69.00$ & $90.00$& $49.00$ & $89.00$& $38.00$ & $88.00$\\ \cline{1-7}
        & \multicolumn{6}{c}{\textbf{Verification Times (in Seconds)}}                                             \\ \cline{1-7}

$d = 250$& $213.86$ & $52.09$& $627.14$ & $257.12$& $1215.86$ & $749.41$\\
$d = 245$& $232.81$ & $68.98$& $931.28$ & $295.54$& $2061.98$ & $1168.31$\\
$d = 240$& $301.58$ & $102.61$& $1451.39$ & $705.03$& $3148.16$ & $2461.89$\\ \hline
\end{tabular}}
\caption{Verification results of the medium MNIST CNN.}
\label{tab:medium_convnet}
\vspace{-2em}
\end{table}
\vspace{-1em}
\begin{table}[h]
   \centering
   \resizebox{0.67\linewidth}{!}{
\setlength{\arrayrulewidth}{.1em}
\begin{tabular}{lcccccc}
        \cline{2-7}
        & \multicolumn{6}{c}{\textbf{Robustness Results (in Percent)}}                                          \\ \cline{2-7}
        & \multicolumn{2}{c}{$\delta = 0.005$} & \multicolumn{2}{c}{$\delta = 0.01$} & \multicolumn{2}{c}{$\delta = 0.015$} \\  \cline{2-7}
        & \textit{Polytope} & \textit{ImageStar~~}       & \textit{~~Polytope}       & \textit{ImageStar~~}       & \textit{~~Polytope}        & \textit{ImageStar}       \\ \cline{1-7}
$d = 250$& $90.00$ & $99.00$& $83.00$ & $99.00$& $MemErr$ & $99.00$\\
$d = 245$& $91.00$ & $100.00$& $75.00$ & $100.00$& $MemErr$ & $100.00$\\
$d = 240$& $81.00$ & $99.00$& $MemErr$ & $99.00$& $MemErr$ & $99.00$\\ \cline{1-7}
        & \multicolumn{6}{c}{\textbf{Verification Times (in Seconds)}}                                             \\ \cline{1-7}

$d = 250$& $917.23$ & $67.45$& $5221.39$ & $231.67$& $MemErr$ & $488.69$\\
$d = 245$& $1420.58$ & $104.71$& $6491.00$ & $353.02$& $MemErr$ & $1052.87$\\
$d = 240$& $1872.16$ & $123.37$& $MemErr$ & $476.67$& $MemErr$ & $1522.50$\\ \hline
\end{tabular}}
\caption{Verification results of the large MNIST CNN.}
\label{tab:large_convnet}
\vspace{-2.5em}
\end{table}
\vspace{-0.25em}

\subsection{Robustness Verification of VGG16 and VGG19}
In this section, we evaluate the polytope and ImageStar methods on real-world CNNs, the VGG16 and VGG19 classification networks~\cite{simonyan2014very}.
We use Foolbox \cite{rauber2017foolbox} to generate the well-known DeepFool adversarial attacks \cite{moosavi2016deepfool} on a set of $20$ bell pepper images.
From an original image $ori\_im$, Foolbox generates an adversarial image $adv\_im$ that can fool the network.
The difference between two images is defined by $diff\_im = adv\_im - ori\_im$.
We want to verify if we apply $(l + \delta)$ percent of the attack on the original image, whether or not the network classifies the disturbed images correctly.
The set of disturbed images can be represented as an ImageStar as follows $disb\_im = ori\_im + (l + \delta)\times diff\_im$, where $l$ is the percentage of the attack at which we want to verify the robustness of the network, and $\delta$ is a small perturbation around $l$, i.e., $0 \leq \delta \leq \delta_{max}$.
Intuitively, $l$ describes how close we are to the attack, and the perturbation $\delta$ represents the size of the input set.

Table \ref{tab:VGG} shows the verification results of VGG16 and VGG19 with different levels of the DeepFool attack.
The networks are robust if they classify correctly the set of disturbed images $disb\_im$ as bell peppers.
To guarantee the robustness of the networks, the output corresponding to the bell pepper label (index $946$) needs to be the maximum output compared with others.
The table shows that with a small input set, small $\delta$, the polytope and ImageStar can prove the robustness of VGG16 and VGG19 with a reasonable amount of time.
Notably, the verification times as well as the robustness results of the polytope and ImageStar methods are similar when they deal with small input sets except for two cases where ImageStar is faster than the polytope method.
It is interesting to note that according to the verification results for the VGG and MNIST networks, deep networks seem to be more robust than shallow networks.

\begin{table}[h]
   \centering
   \resizebox{0.9\linewidth}{!}{
\setlength{\arrayrulewidth}{.1em}
\begin{tabular}{lcccc|cccc}
        \cline{2-9}
        & \multicolumn{8}{c}{\textbf{Robustness Results (in percentage)}}                                                                                                     \\ \cline{2-9}
        & \multicolumn{4}{c}{\textbf{VGG16}}                                                           & \multicolumn{4}{c}{\textbf{VGG19}}                                   \\ \cline{2-9}
        & \multicolumn{2}{c}{$\delta = 10^{-7}$}& \multicolumn{2}{c}{$\delta = 2\times 10^{-7}$} & \multicolumn{2}{c}{$\delta = 10^{-7}$} & \multicolumn{2}{c}{$\delta = 2\times 10^{-7}$} \\ \cline{2-9}
        & $Polytope$                        & $ImageStar~~$ & $~~Polytope$                         & $ImageStar$ & $Polytope$        & $ImageStar~~$       & $~~Polytope$       & $ImageStar$       \\ \cline{1-9}
$l = 0.96$& $85.00$ & $85.00$& $85.00$ & $85.00$           &                $100.00$ & $100.00$& $100.00$ & $100.00$                 \\
$l = 0.97$& $85.00$ & $85.00$& $85.00$ & $85.00$           &                $100.00$ & $100.00$& $100.00$ & $100.00$                \\
$l = 0.98$& $85.00$ & $85.00$& $85.00$ & $85.00$          &                 $95.00$ & $95.00$& $95.00$ & $95.00$                 \\ \cline{1-9}
\multicolumn{9}{c}{\textbf{Verification Times (in Seconds)}}                                                                                                                  \\  \cline{1-9}
$l = 0.96$& $319.04$ & $318.60$& $327.61$ & $319.93$          &             $320.91$ & $314.14$& $885.07$ & $339.30$                \\
$l = 0.97$& $324.93$ & $323.41$& $317.27$ & $324.90$          &             $315.84$ & $315.27$& $319.67$ & $314.58$                 \\
$l = 0.98$& $315.54$ & $315.26$& $468.59$ & $332.92$          &             $320.53$ & $320.44$& $325.92$ & $317.95$                 \\ \hline
\end{tabular}}
\caption{Verification results of VGG networks.}
\label{tab:VGG}
\vspace{-3em}
\end{table}
\vspace{-1em}

\subsection{Exact Analysis vs. Approximate Analysis}
We have compared our ImageStar approximate scheme with the zonotope and polytope approximation methods.
It is interesting to investigate the performance of ImageStar exact scheme in comparison with the approximate one.
To illustrate the advantages and disadvantages of the exact scheme and approximate scheme, we consider the robustness verification of VGG16 and VGG19 on a single ImageStar input set created by an adversarial attack on a bell pepper image.
The verification results are presented in Table \ref{tab:exact vs. approx}.
The table shows that for a small perturbation $\delta$, the exact and over-approximate analysis can prove the robustness of the VGG16 around some specific levels of attack in approximately one minute.
We can intuitively verify the robustness of the VGG networks via visualization of their output ranges.
An example of the output ranges of VGG19 for the case of $l = 0.95\%, \delta_{max} = 2\times 10^{-7}$ is depicted in Figure \ref{fig:vgg19-exact-range}.
One can see from the figure that the output of the index $946$ corresponding to the bell pepper label is always the maximum one compared with others, which proves that VGG19 is robust in this case.
From the table, it is interesting that VGG19 is not robust if we apply $\geq 98\%$ of the attack.
Notably, the exact analysis can give us correct answers with a counter-example set in this case.
However, the over-approximate analysis cannot prove that VGG19 is not robust since its obtained reachable set is an over-approximation of the exact one.
Therefore, it may be the case that the over-approximate reachable set violates the robustness property because of its conservativeness.
A counter-example generated by the exact analysis method is depicted in Figure \ref{fig:counter-example} in which the disturbed image is classified as strawberry instead of bell pepper since the strawberry output is larger than the bell pepper output in this case.

\vspace{-1em}
\begin{table}[h!]
    \centering
   \resizebox{0.7\linewidth}{!}{
\setlength{\arrayrulewidth}{.1em}
\begin{tabular}{llclccclcl}
\hline
\multicolumn{1}{l}{\multirow{3}{*}{$\mathbf{l}$}} & \multicolumn{1}{l}{\multirow{3}{*}{$\mathbf{\delta_{max}}$}} & \multicolumn{4}{c}{\textbf{VGG16}}                                                              & \multicolumn{4}{c}{\textbf{VGG19}}                                                              \\
\multicolumn{1}{c}{}                            & \multicolumn{1}{c}{}                                & \multicolumn{2}{c}{\textbf{Exact}} & \multicolumn{2}{c}{\textbf{Approximate}} & \multicolumn{2}{c}{\textbf{Exact}} & \multicolumn{2}{c}{\textbf{Approximate}} \\ \cline{3-10}
\multicolumn{1}{c}{}                            & \multicolumn{1}{c}{}                                & Robust       & \multicolumn{1}{c}{VT}       & Robust                 & VT                       & Robust       & \multicolumn{1}{c}{VT}       & Robust           & \multicolumn{1}{c}{VT}         \\ \hline
\multirow{2}{*}{50\%}                           & $10^{-7}$                              & Yes          & 64.56226                     & Yes                    & 60.10607                 & Yes          & 234.11977                    & Yes              & 72.08723                       \\
                                                & $2 \times 10^{-7}$                            & Yes          & 63.88826                     & Yes                    & 59.48936                 & Yes          & 1769.69313                   & Yes              & 196.93728                      \\ \hline
\multirow{2}{*}{80\%}                           & $10^{-7}$                              & Yes          & 64.92889                     & Yes                    & 60.31394                 & Yes          & 67.11730                     & Yes              & 63.33389                       \\
                                                & $2 \times 10^{-7}$                            & Yes          & 64.20910                     & Yes                    & 59.77254                 & Yes          & 174.55983                    & Yes              & 200.89500                      \\ \hline
\multirow{2}{*}{95\%}                           & $10^{-7}$                              & Yes          & 67.64783                     & Yes                    & 59.89077                 & Yes          & 73.13642                     & Yes              & 67.56389                       \\
                                                & $2 \times 10^{-7}$                            & Yes          & 63.83538                     & Yes                    & 59.23282                 & Yes          & 146.16172                    & Yes              & 121.91447                      \\ \hline
\multirow{2}{*}{97\%}                           & $10^{-7}$                              & Yes          & 64.30362                     & Yes                    & 59.79876                 & Yes          & 77.25398                     & Yes              & 64.43168                       \\
                                                & $2 \times 10^{-7}$                            & Yes          & 64.06285                     & Yes                    & 61.23296                 & Yes          & 121.70296                    & Yes              & 107.17331                      \\ \hline
\multirow{2}{*}{98\%}                           & $10^{-7}$                              & Yes          & 64.06183                     & Yes                    & 59.89959                 & No           & 67.68139                     & Unkown           & 64.47035                       \\
                                                & $2 \times 10^{-7}$                            & Yes          & 64.01997                     & Yes                    & 59.77469                 & No           & 205.00939                    & Unknown          & 107.42679                      \\ \hline
\multirow{2}{*}{98.999\%}                                    & $10^{-7}$                              & Yes          & 64.24773                     & Yes                    & 60.22833                 & No           & 71.90568                     & Unknown          & 68.25916                       \\
                                                & $2 \times 10^{-7}$                            & Yes          & 63.67108                     & Yes                    & 59.69298                 & No           & 106.84492                    & Unknown          & 101.04668                      \\ \hline
\end{tabular}}
\caption{Verification results of the VGG16 and VGG19 in which $VT$ is the verification time (in seconds) using the ImageStar exact and approximate schemes.}
\label{tab:exact vs. approx}
\vspace{-2em}
\end{table}

\begin{figure}[h!]
  \centering
      \includegraphics[width=0.7\textwidth]{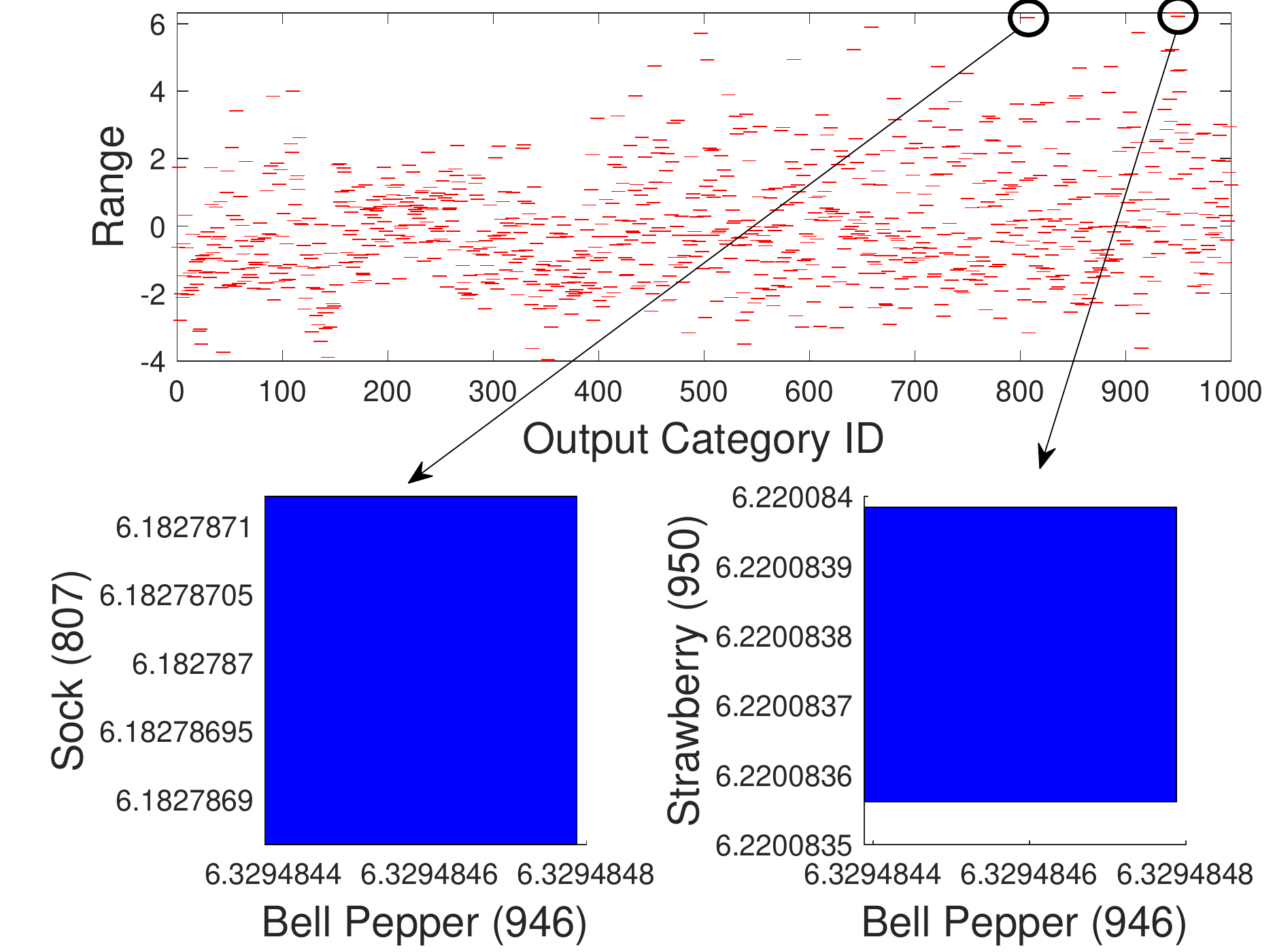}
  \caption{Exact ranges of VGG19 shows that VGG19 correctly classifies the input image as a bell pepper.}
  \label{fig:vgg19-exact-range}
  \vspace{-1em}
\end{figure}

\begin{figure}[h!]
  \centering
      \includegraphics[width=0.7\textwidth]{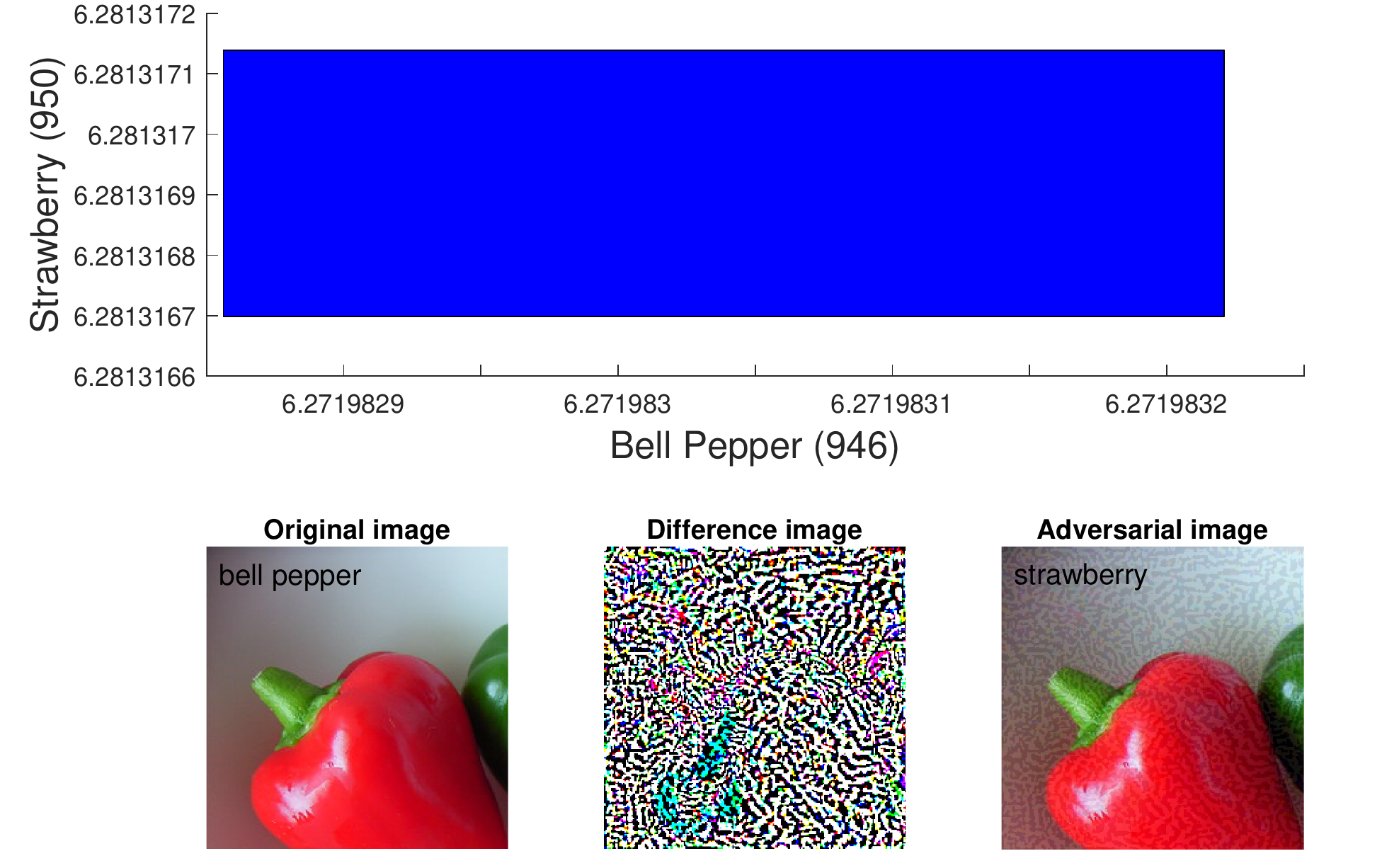}
  \caption{A counter-example shows that VGG19 misclassifies the input image as a strawberry instead of a bell pepper.}
  \label{fig:counter-example}
  \vspace{-1em}
\end{figure}

To optimize the verification time, it is important to know the times consumed by each type of layers in the reachability analysis step.
Figure \ref{fig:reachTime} described the total reachability times of the convolutional layers, fully connected layers, max pooling layers and ReLU layers in the VGG19 with $50\%$ attack and $10^{-7}$ perturbation.
As shown in the figure, the reachable set computation in the convolutional layers and fully connected layers can be done very quickly, which shows the advantages of the ImageStar data structure.
Notably, the total reachability time is dominated by the time of computing the reachable set for $5$ max pooling layers and $18$ ReLU layers.
This is because the computation in these layers concerns solving a large number of linear programing (LP) optimization problems such as finding lower bound and upper bound, and checking max point candidates.
Therefore, to optimize the computation time, we need to minimize the number of LP problems in the future.
\begin{figure}[h!]
  \centering
      \includegraphics[width=0.6\textwidth]{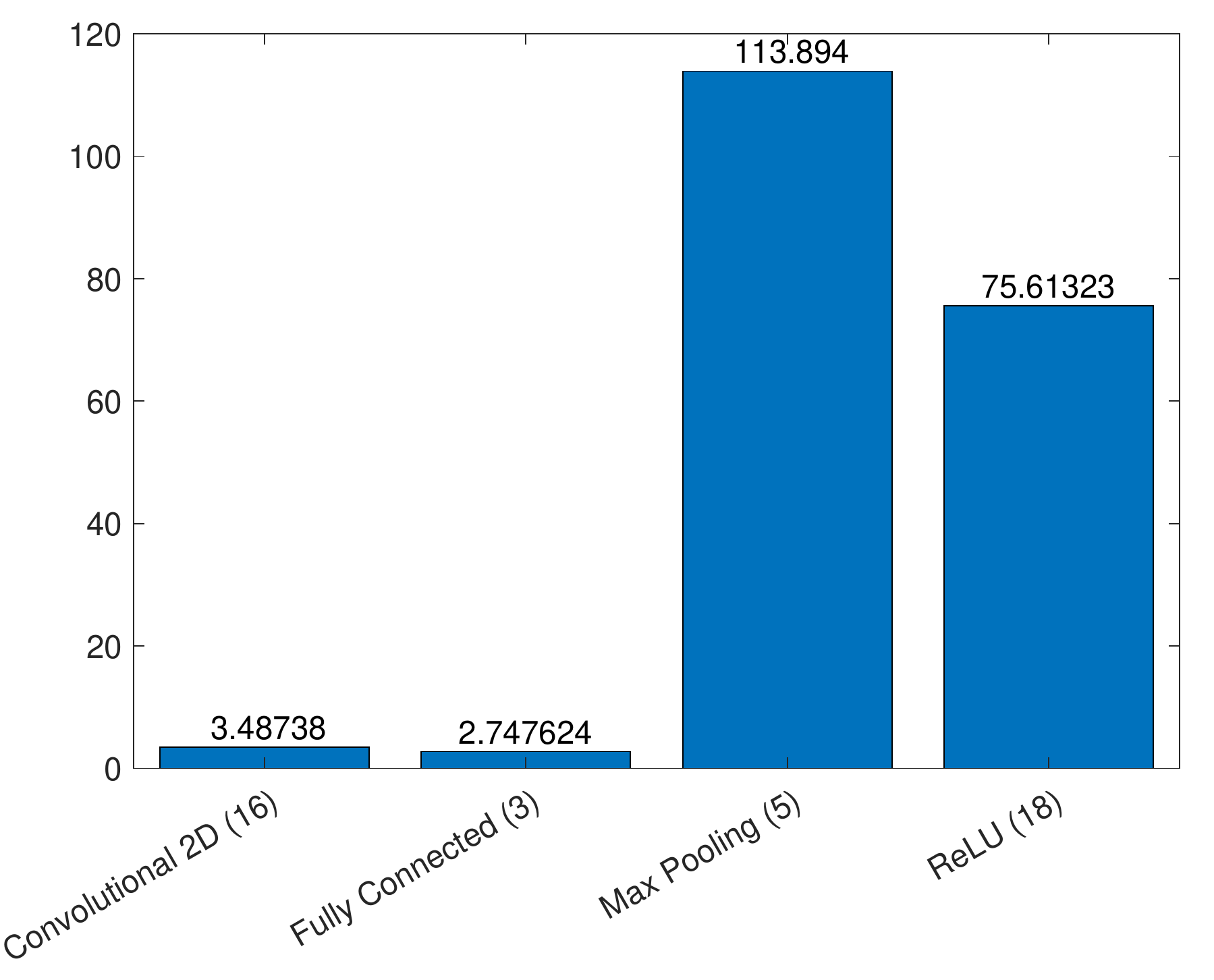}
  \caption{Total reachability time of each type of layers in the VGG19 in which the max pooling and ReLU layers dominate the total reachability time of the network.}
  \label{fig:reachTime}
  \vspace{-2em}
\end{figure}

\section{Discussion}
When we apply our approach on real-world networks, it has been shown that \textbf{the size of the input set is the most important factor that affects the performance of verification approaches}.
However, this important issue has not been emphasized in the existing literature.
Most of the existing approaches focus on the size of the network that they can analyze.
We believe that \textbf{all methods (including the method we proposed in this paper) are scalable for large networks only for small input sets}.
When the input set is large, it causes three major problems in the analysis, which are the explosions in 1) computation time; 2) memory usage; and 3) conservativeness.
In the exact analysis method, a large input set causes more splits in the max-pooling layer and the ReLU layer. A single ImageStar may split into many new ImageStars after these layers, which leads to the explosion in the number of ImageStars in the reachable set as shown in Figure \ref{fig:inputSize-effect}.
Therefore, it requires more memory to handling the new ImageStars and more time for the computation.
One may think that the over-approximate method can overcome this challenge since it obtains only one ImageStar at each layer and the cost we need to pay is only the conservativeness of the result.
The fact is, an over-approximate method usually helps reduce the computation time, as shown in the experimental results.
However, it is not necessarily efficient in terms of memory consumption.
The reason is, if there is a split, it introduces a new predicate variable and new generator. If the number of generators and the dimensions of the ImageStar are large, it requires a massive amount of memory to store the over-approximate reachable set.
For instance, \textbf{if there are $100$ splits happened in the first ReLU layer of the VGG19, the second convolutional layer will receive an ImageStar of size $224 \times 224 \times 64$ with $100$ generators. To store this ImageStar with double precision, we need approximately $2.4GB$ of memory}.
In practice, the dimensions of the ImageStars obtained in the first several convolutional layers are usually large.
Therefore, if splitting happens in these layers, we may need to deal with ``out of memory'' problem.
We see that all existing approaches such as the zonotope \cite{singh2018fast} and polytope \cite{singh2019abstract}, all face the same challenges.
Additionally, the conservativeness of an over-approximate reachable set is a crucial factor in evaluating an over-approximation approach.
Therefore, the exact analysis still plays an essential role in the analysis of neural networks since it helps to evaluate the conservativeness of the over-approximation approaches.

\begin{figure}[t]
  \centering
      \includegraphics[width=0.5\textwidth]{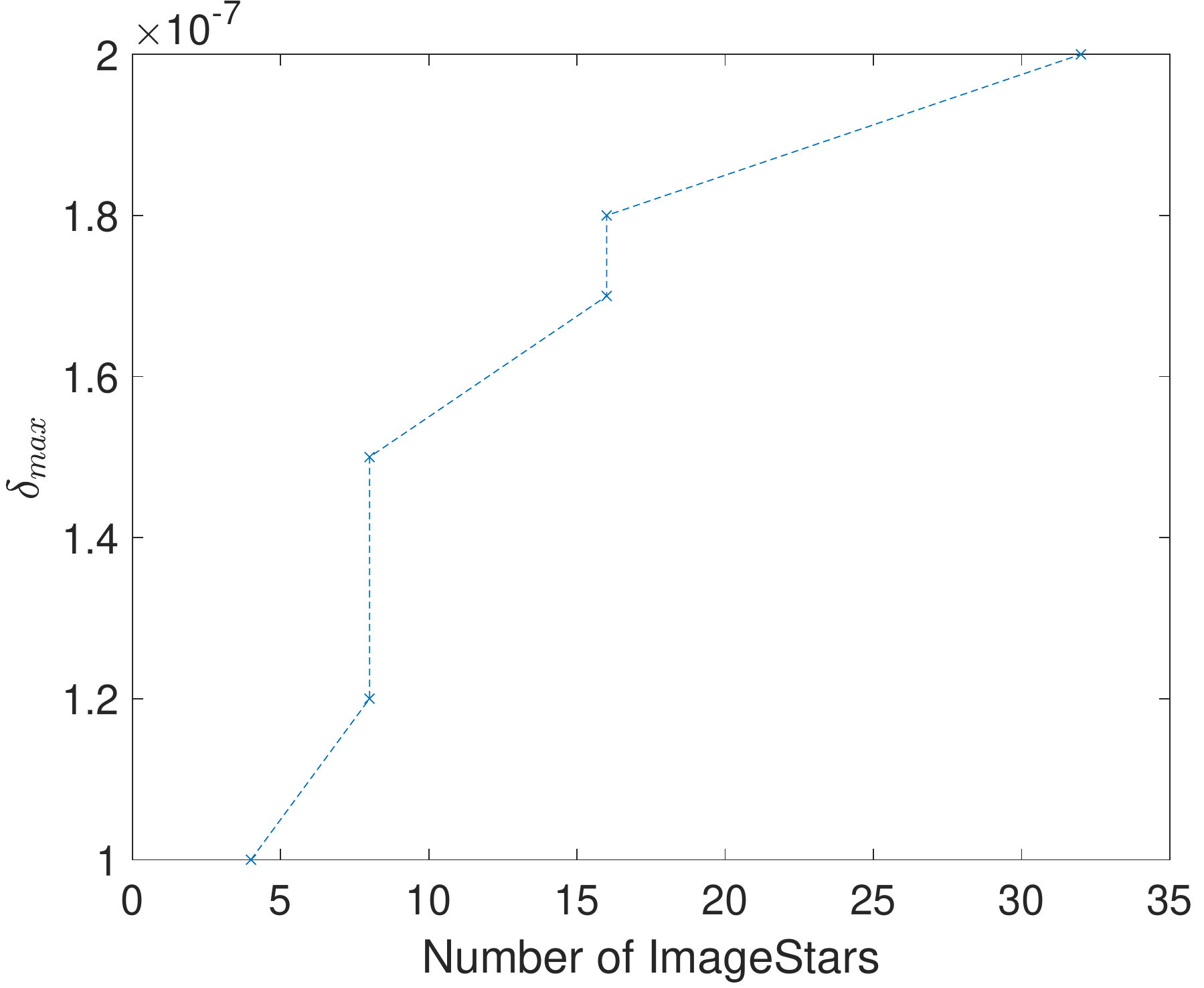}
  \caption{Number of ImageStars in exact analysis increases with input size.}
  \label{fig:inputSize-effect}
  \vspace{-2em}
\end{figure}


%

\section{Conclusion}
We have proposed a new set-based method for robustness verification of deep CNNs using the concept of the ImageStar. The core of our method are the exact and over-approximate reachability algorithms for ImageStar input sets. The experiments show that our approach is less conservative than the recent zonotope \cite{singh2018fast} and polytope \cite{singh2019abstract} approaches. It is also faster than these approaches when dealing with deep networks. Notably, our approach can be applied to verify the robustness of real-world CNNs with small perturbed input sets. It can also compute the exact reachable set and visualize the exact output range of deep CNNs, and the analysis can sped up significantly using parallel computing. We have found and shown the size of the input set to be an important factor that affects the reachability algorithms performance. Our future work is improving the proposed method to deal with larger input sets and optimizing the memory and time complexity of our computations. 


\newpage

\normalsize
\let\oldbibliography\thebibliography
\renewcommand{\thebibliography}[1]{\oldbibliography{#1}
\setlength{\itemsep}{0pt}} 
\bibliographystyle{splncs03}
\bibliography{tran,cav2020}  


\newpage

\appendix
\section{Appendix}

\subsection{Exact reachability algorithm for a max-pooling layer}\label{sec:exact-maxpool}
Algorithm \ref{alg:exact-reach-max-pooling} illustrates the exact reachability of a max-pooling layer with noticing that for an ImageStar set $I$, the anchor and generator images are put into a single 4-dimensional array $V$ called \emph{basis array} in which $V(:,:,:,1)$ is the anchor images. The algorithm works as follows. Firstly, we perform zero-padding for the ImageStar input set $I$ (line 2) by padding zeros to the anchor and generator images. Secondly, the zero-padding set $I^{\prime}$ is used to compute the size of the \emph{max map} $[h,w]$ and the start indexes $startID$ of each local region. Thirdly, we initialize the basis $V^{\prime}$ of the max map (line 6) and find all max-point candidates $maxID$ for every single point $[i, j, k]$ in the max map (line 13). If there is only one max-point candidate for a point $[i, j, k]$ in the max map, we update the basis of the max map. If not, we store this point to a list of splitting points $splitID$ (line 18) and then initialize the max map $R$ (line 19). Lastly, we perform splitting operations through the list of splitting points (line 23) to get the final output set which is an array of ImageStars. 
\begin{algorithm}[]
	\caption{Exact reachability algorithm for a max pooling layer.}\label{alg:exact-reach-max-pooling}
	\begin{algorithmic}[1]
        \footnotesize
        \Procedure{\texttt{$R = reach\_exact$}}{$I = \langle V, C, P \rangle, poolSize, stride, paddingSize$}
						\State $I^{\prime} = zero\_padding(I, paddingSize)$ ~{\scriptsize \textcolor{blue}{$\triangleright$ zero padding for the input set}}
						\State $[h,w] = get\_size\_maxMap(I^{\prime}, poolSize, stride)$ ~{\scriptsize \textcolor{blue}{$\triangleright$ compute the size of the max map}}					
						\State $nc = I^{\prime}.number\_channels$, ~{\scriptsize \textcolor{blue}{$\triangleright$ get the number of channels}}
						\State $np = I^{\prime}.number\_predicate\_variables$ ~{\scriptsize \textcolor{blue}{$\triangleright$ get the number of predicate variables}}				
						\State $V^{\prime}(:,:,nc, np+1) = zeros(h, w)$ ~{\scriptsize \textcolor{blue}{$\triangleright$ pre-allocate the basis of the max map}}										
						\State $startID = get\_startPoints(I^{\prime}, poolSize, stride)$ ~{\scriptsize \textcolor{blue}{$\triangleright$ the start index for each local region}}
						\State $maxID = cell(h,w,nc)$, ~{\scriptsize \textcolor{blue}{$\triangleright$ to store the index of the max-point candidates}}	
						\State $splitID = [~]$
						\For {\texttt{$k=1:nc$}}
								\For {\texttt{$i=1:h$}}
										\For {\texttt{$j=1:w$}}
												\State $maxID\{i, j, k\} = I^{\prime}.get\_localMax\_index(startID\{i,j\}, poolSize, k)$
												\If {$size(maxID\{i, j, k\},1) == 1$} 
														\State $[i^{\prime}~j^{\prime}~k] = maxID\{i, j, k\}$ ~{\scriptsize \textcolor{blue}{$\triangleright$ the local region has only one max-point}}
														\State $V^{\prime}(i,j,k, :) = I^{\prime}.V(i^{\prime}, j^{\prime}, k, :)$ ~{\scriptsize \textcolor{blue}{$\triangleright$ update the basis of the max map}}
												\Else
														\State $[i~j~k] \rightarrow splitID$ ~{\scriptsize \textcolor{blue}{$\triangleright$ store splitting index}}
												\EndIf
										\EndFor
								\EndFor
						\EndFor
						\State $R = \langle V^{\prime}, C, P \rangle$
						\State $n = size(splitID, 1)$ ~{\scriptsize \textcolor{blue}{$\triangleright$ number of splitting indexes}}
						\For {\texttt{$l=1:n$}}
								\State $[i~j~k] = splitID(l,:,:)$ ~{\scriptsize \textcolor{blue}{$\triangleright$ get splitting index}}
								\State $R = split(R, I^{\prime}, [i~j~k], maxID\{i, j, k\})$ ~{\scriptsize \textcolor{blue}{$\triangleright$ split ImageStars}}
						\EndFor		
				\EndProcedure 
				
				\Procedure {\texttt{$R^{\prime} = split$}}{$R,I^{\prime}, [i~j~k], maxID\{i, j, k\}$}
						\State $R = [R_1 ~R_2~\dots~R_m]$ ~{\scriptsize \textcolor{blue}{$\triangleright$ multiple input sets}}
						\State $R^{\prime} = [~]$
						\For {\texttt{$l=1:m$}}
								\State $IS = stepSplit(R_l, I^{\prime}, [i~j~k], maxID\{i, j, k\})$ ~{\scriptsize \textcolor{blue}{$\triangleright$ step splitting}}
								\State $IS \rightarrow R^{\prime}$ ~{\scriptsize \textcolor{blue}{$\triangleright$ store the new ImageStars}}
						\EndFor 
				\EndProcedure
				
				\Procedure {\texttt{$R^{\prime} = stepSplit$}}{$R_l,I^{\prime}, [i~j~k], maxID\{i, j, k\}$}
						 \State $maxID\{i, j, k\} = [i^{\prime}_1 ~ j^{\prime}_1 ~ k;~\dots~; i^{\prime}_q ~ j^{\prime}_q ~ k ]$  ~{\scriptsize \textcolor{blue}{$\triangleright$ the local region has $q$ max-points}}
						 \State $R^{\prime} = [~]$
						 \For {\texttt{$l=1:q$}} ~{\scriptsize \textcolor{blue}{$\triangleright$ a single ImageStar is split into $q$ ImageStars}}
									\State $max\_point = [i^{\prime}_l ~ j^{\prime}_l ~ k]$, $others = maxID\{i, j, k\} \setminus max\_point$ 
									\State $[C^{\prime},d^{\prime}] = getConstraints(R_l,I^{\prime}, max\_point, others)$  ~{\scriptsize \textcolor{blue}{$\triangleright$ get the constraints on the predicate variables that make a max-point candidate become the max point}}
									\State $V^{\prime} = R_l.V$, $V^{\prime}(i, j, k, :) = I^{\prime}(i^{\prime}_l,j^{\prime}_l, k,:)$  ~{\scriptsize \textcolor{blue}{$\triangleright$ update the basis}}
									\State $IS = \langle V^{\prime}, C^{\prime}, d^{\prime} \rangle$, $IS \rightarrow R^{\prime}$ ~{\scriptsize \textcolor{blue}{$\triangleright$ construct and store the reach set}}
						 \EndFor
				\EndProcedure
				
	\end{algorithmic}
\end{algorithm}

\subsection{Approximate reachability algorithm for a max-pooling layer}\label{sec:approx-maxpool}
Algorithm \ref{alg:approx-reach-max-pooling} illustrates the approximate reachability of a max-pooling layer. Similar to the exact algorithm, we perform zero-padding for the ImageStar input set $I$ (line 2) by padding zeros to the anchor and generator images. The zero-padding set $I^{\prime}$ is then used to compute the size of the \emph{max map} $[h,w]$ and the start indexes $startID$ of each local region. Thirdly, we initialize the basis $V^{\prime}$ of the max map (line 6) and find all max-point candidates $maxID$ for every single point $[i, j, k]$ in the max map (line 13). We use $np$ to count the number of predicate variables when we do over-approximate reachability (line 15). If a local region has more than one max-point candidate, we introduce a new predicate variable representing the max-point of that region. Using the max-point candidate indexes $maxID$ and the new predicate variable index $new\_pred\_id$, we update the basis of the max map (line 17-28). Finally, we add the constraints on the new introduced predicate variables (line 30-57) and then construct the over-approximate ImageStar output set $R$ (line 58). 
\begin{algorithm}[]
	\caption{Approximate reachability algorithm for a max pooling layer.}\label{alg:approx-reach-max-pooling}
	\begin{algorithmic}[1]
        \footnotesize
        \Procedure{\texttt{$R = reach\_exact$}}{$I = \langle V, C, P \rangle, poolSize, stride, paddingSize$}
						\State $I^{\prime} = zero\_padding(I, paddingSize)$ ~{\scriptsize \textcolor{blue}{$\triangleright$ zero padding for the input set}}
						\State $[h,w] = get\_size\_maxMap(I^{\prime}, poolSize, stride)$ ~{\scriptsize \textcolor{blue}{$\triangleright$ compute the size of the max map}}					
						\State $nc = I^{\prime}.number\_channels$, ~{\scriptsize \textcolor{blue}{$\triangleright$ get the number of channels}}
						\State $np_0 = I^{\prime}.number\_predicate\_variables$ ~{\scriptsize \textcolor{blue}{$\triangleright$ get the number of predicate variables}}				
						\State $V^{\prime}(:,:,nc, np+1) = zeros(h, w)$ ~{\scriptsize \textcolor{blue}{$\triangleright$ pre-allocate the basis of the max map}}										
						\State $startID = get\_startPoints(I^{\prime}, poolSize, stride)$ ~{\scriptsize \textcolor{blue}{$\triangleright$ the start index for each local region}}
						\State $maxID = cell(h,w,nc)$, ~{\scriptsize \textcolor{blue}{$\triangleright$ to store the index of the max-point candidates}}
					  \State $np = np_0$
						\For {\texttt{$k=1:nc$}}
								\For {\texttt{$i=1:h$}}
										\For {\texttt{$j=1:w$}}
												\State $maxID\{i, j, k\} = I^{\prime}.get\_localMax\_index(startID\{i,j\}, poolSize, k)$
												\If {$size(maxID\{i, j, k\},1) > 1$} 
														\State $np = np + 1$ ~{\scriptsize \textcolor{blue}{$\triangleright$ increase the number of predicate variables}}
												\EndIf
										\EndFor
								\EndFor
						\EndFor
						
						\State ~{\scriptsize \textcolor{blue}{$\triangledown$ update the basis of the max map}}
						\State $new\_pred\_id = 0$~{\scriptsize \textcolor{blue}{$\triangleright$ new predicate variable index}}
						\For {\texttt{$k=1:nc$}}
								\For {\texttt{$i=1:h$}}
										\For {\texttt{$j=1:w$}}
												\If {$size(maxID\{i, j, k\},1) == 1$} 
														\For {\texttt{$p=1:np_0 + 1$}}
																\State $[i^{\prime}~j^{\prime}~k] = maxID\{i, j, k\}$
																\State $V^{\prime}(i,j,k,p) = I^{\prime}.V(i^{\prime},j^{\prime}, k, p)$
														\EndFor
												\Else
														\State $V^{\prime}(i,j,k,1) = 0$
														\State $new\_pred\_id = new\_pred\_id + 1$
														\State $V^{\prime}(i,j,k,np_0 + 1 + new\_pre\_id) = 1$
												\EndIf
										\EndFor
								\EndFor
						\EndFor
					  \State ~{\scriptsize \textcolor{blue}{$\triangledown$ update the constraints on predicate variables}}
						\State $N = poolSize(1) \times poolSize(2)$
						\State $C^{\prime} = zeros(new\_pre\_id \times (N + 1), np)$
						\State $d^{\prime} = zeros(new\_pre\_id \times (N + 1), 1)$
						\State $l = 0$
						\State $\alpha = [\alpha_1 \dots \alpha_{np_0 + l} \dots \alpha_{np}]^T$ ~{\scriptsize \textcolor{blue}{$\triangledown$ vector of predicate variables}}
						\For {\texttt{$k=1:nc$}}
								\For {\texttt{$i=1:h$}}
										\For {\texttt{$j=1:w$}}
												\If {$size(maxID\{i, j, k\},1) > 1$} 
														\State $l = l + 1$
														\State~{\scriptsize \textcolor{blue}{$\triangledown$ get all related local pixel indexes}}
														\State $points = I^{\prime}.get\_localPoints(startID{i,j}, poolSize, k)$
														\State~{\scriptsize \textcolor{blue}{$\triangledown$ upper bound of new predicate variable: $\alpha_{np_0 + l}\leq ub \equiv C_1 \alpha \leq d_1$}}													
														\State $C_1 = zeros(1, np),~C_1(np_0 + l) = 1$
														\State~{\scriptsize \textcolor{blue}{$\triangledown$ get bounds of the local pixel values}}
														\State $[lb, ub] = I^{\prime}.get\_localBound(startID{i,j}, poolSize, k)$ 
														\State $d_1 = ub$
														\State $C_2 = zeros(N, np),~d_2 = zeros(N, 1)$
														\For {\texttt{$g=1:N$}}
																\State $[i^{\prime}~j^{\prime}] = points(g,:)$
																\State~{\scriptsize \textcolor{blue}{$\triangledown$ new constraint: $\alpha_{l}\geq x_{i^{\prime}j^{\prime}k} \equiv C_2 \alpha \leq d_2$}}	
																\State $C_2(g, 1:np_0) = I^{\prime}.V(i^{\prime}, j^{\prime}, k, 2:np_0 + 1)$,
																\State $C_2(g, np_0 + l) = -1$
																\State $d_2(g) = -I^{\prime}.V(i^{\prime}, j^{\prime}, k, 1)$
														\EndFor
														\State $C^{\prime}((l-1)\times (N+1) + 1: l \times(N+1), :) = [C_1; C_2]$
														\State $d^{\prime}((l-1)\times (N+1) + 1: l \times(N+1)) = [d_1; d_2]$
												\EndIf
										\EndFor
								\EndFor
						\EndFor
						
						\State $C^{\prime} = [I^{\prime}.C~~zeros(size(I^{\prime}.C, 1), l);~C^{\prime}]$
						\State $d^{\prime} = [I^{\prime}.d;~d^{\prime}]$
						\State $R = \langle V^{\prime}, C^{\prime}, d^{\prime} \rangle$
						
				\EndProcedure 		
				
	\end{algorithmic}
\end{algorithm}

\subsection{Architectures of MNIST networks (Figure \ref{fig:MNIST_networks})}
\begin{figure}[h!]
  \centering
      \includegraphics[width=\textwidth]{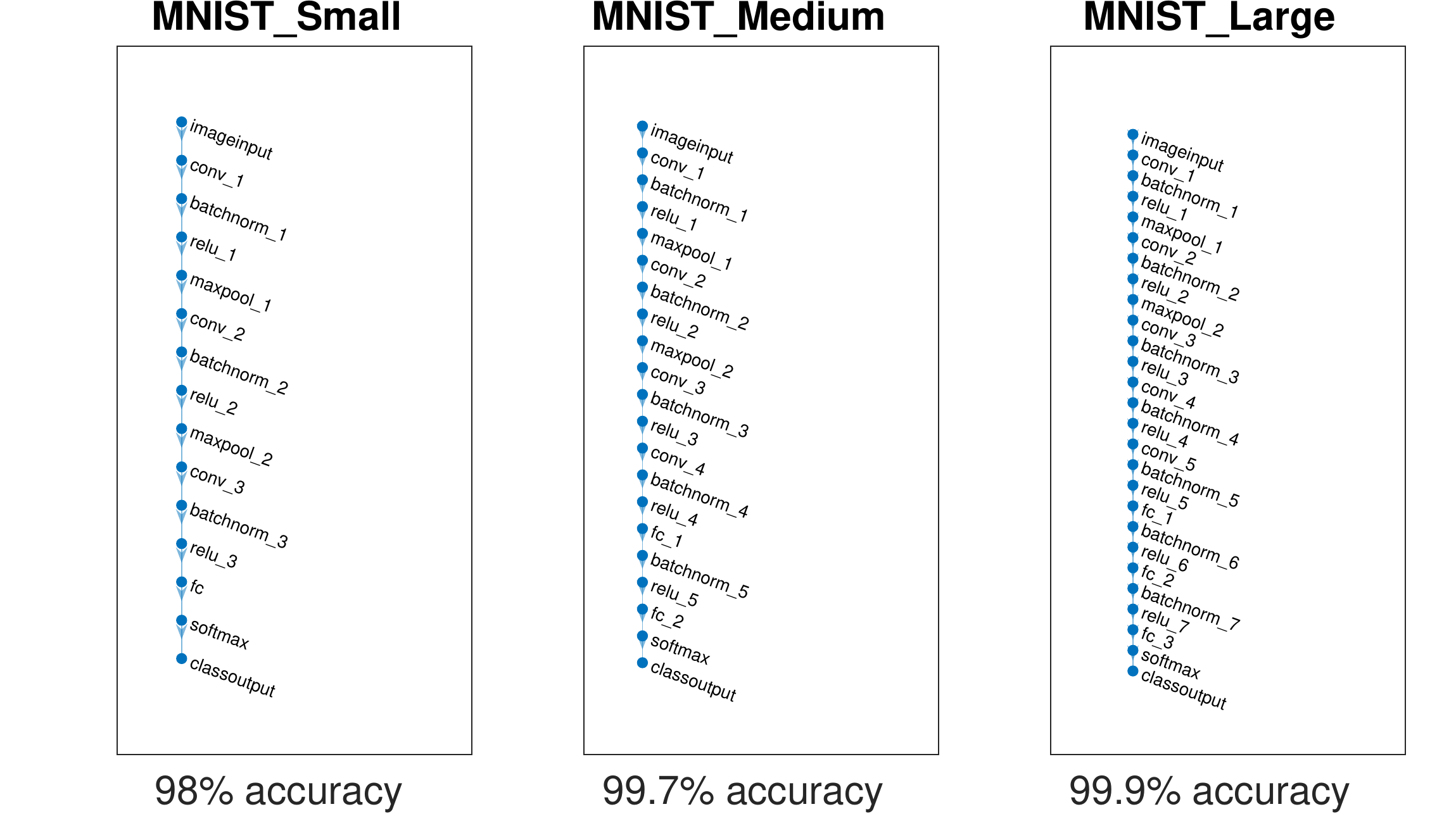}
  \caption{The architectures of the small, medium, and large MNIST classification networks.}
  \label{fig:MNIST_networks}
  \vspace{-2em}
\end{figure}

\subsection{Comparison with ERAN-DeepZ\cite{singh2018fast}}
In this section, we present the comparison between NNV and ERAN-DeepZ on the ERAN $ConvMaxPool$ network on a subset of CIFAR-10 data set containing $60000$ $32\times 32$ color images in $10$ classes.\footnote{\url{https://github.com/verivital/run_nnv_eran_comparison}}
We choose this network because it is the only one that has max-pooling layers in all ERAN networks trained on CIFAR-10. 
The network has $4$ Convolutional layers, $2$ max-pooling layer, $7$ relu layers, and $3$ fully connected layers. 
Although ERAN has chosen $100$ images for the robustness analysis, we have experienced that only $48$ images are correctly classified by the network without any attacks which means the accuracy of the network is very low ($\approx 48\%$). 
Despite the low accuracy, we still use it to clarify the advantages/disadvantages of our ImageStar approach in comparison with ERAN-DeepZ.      
We note that the pixel values in the tested images are scaled into [0,~1] in the analysis, and the well-known brightening attack \cite{gehr2018ai2} is used for the robustness analysis of the network.  
For any image, the values of some pixels $x_i$ are changed independently to $x_i^{\prime}$ under the brightening attack which results in a set of images that can be described by a zonotope $Z = \{x_i^{\prime}| 1-\delta \leq x_i \leq x_i^{\prime} \leq 1\ \vee x_i^{\prime} = x_i\}$ where $\delta$ is called a robustness bound \cite{gehr2018ai2}.

The robustness verification results are presented in Table \ref{tab:CIFAR-ConvMaxPool} which shows that our method gives a slightly better result than ERAN in terms of number of verifiable images. 
Additionally, the verification time of our approach grows very quickly as the robustness value $\delta$ increases.  
The growth of verification time is the cost we need to pay to improve the conservativeness of the over-approximate methods. 
One can see that, ERAN-DeepZ method is faster than the ImageStar method when dealing with large input sets (e.g., more attacked pixels) since it does not solve any LP optimization problems when constructing the reachable sets.
In the future, we are going to develop a more relaxed ImageStar method that can neglect to solve LP optimization to deal with larger input sets.
\begin{table}[h]
   \centering
   \resizebox{\linewidth}{!}{
\setlength{\arrayrulewidth}{.1em}
\begin{tabular}{cccccccccc}
        \hline
        \multicolumn{10}{c}{\textbf{Robustness Results (in Percent)}}                                          \\ \hline
        \multicolumn{2}{c}{$\delta = 0.005$} & \multicolumn{2}{c}{$\delta = 0.008$} & \multicolumn{2}{c}{$\delta = 0.01$} & \multicolumn{2}{c}{$\delta = 0.012$} & \multicolumn{2}{c}{$\delta = 0.015$} \\  \hline
        \textit{ERAN} & \textit{ImageStar~~}       & \textit{~~ERAN}       & \textit{ImageStar~~}       & \textit{~~ERAN}        & \textit{ImageStar}  & \textit{~~ERAN}        & \textit{ImageStar} & \textit{~~ERAN}        & \textit{ImageStar}     \\ \hline
				$48/48$ & $48/48$& $46/48$ & $47/48$& $46/48$ & $47/48$ & $44/48$ & $46/48$ & $44/48$ & $46/48$\\ \hline
       \multicolumn{10}{c}{\textbf{Verification Times (in Seconds)}}                                             \\ \hline
				$115$ & $78.05$& $122$ & $155.45$& $123$ & $153.67$ & $130$ & $454.44$ & $129$ & $468.49$\\ \hline
\end{tabular}}
\caption{Verification results of the CIFAR-10 $ConvMaxPool$ network.}
\label{tab:CIFAR-ConvMaxPool}
\end{table}


%






\end{document}